\def\eqref#1{equation~\ref{#1}}
\def\1{\bm{1}}
\def\rva{{\mathbf{a}}}
\def\rvh{{\mathbf{h}}}
\def\rvu{{\mathbf{i}}}
\def\rvr{{\mathbf{r}}}
\def\rvu{{\mathbf{u}}}
\def\rvw{{\mathbf{w}}}
\def\rvx{{\mathbf{x}}}
\def\rvy{{\mathbf{y}}}
\def\rvz{{\mathbf{z}}}
\def\rmA{{\mathbf{A}}}
\def\rmG{{\mathbf{G}}}
\def\rmU{{\mathbf{U}}}
\def\rmX{{\mathbf{X}}}
\def\rmZ{{\mathbf{Z}}}
\DeclareMathAlphabet{\mathsfit}{\encodingdefault}{\sfdefault}{m}{sl}
\SetMathAlphabet{\mathsfit}{bold}{\encodingdefault}{\sfdefault}{bx}{n}
\newcommand{\E}{\mathbb{E}}
\newenvironment{hproof}{%
	\proof}{\endproof}
\newtheorem{proposition}{Proposition}
\newtheorem*{proposition*}{Proposition}
\title{The Functional Neural Process}
\author{
	Christos Louizos \\ 
	University of Amsterdam \\ 
	TNO Intelligent Imaging \\
	\texttt{c.louizos@uva.nl}
	\And 
	Xiahan Shi \\
	Bosch Center for Artificial Intelligence\\ 
	UvA-Bosch Delta Lab \\
	\texttt{xiahan.shi@de.bosch.com}
	\And 
	Klamer Schutte \\
	TNO Intelligent Imaging\\ 
	\texttt{klamer.schutter@tno.nl}
	\And 
	Max Welling\\
	University of Amsterdam \\ 
	Qualcomm \\ 
	\texttt{m.welling@uva.nl}
}
\begin{document}
	
	\maketitle
	
	\newif\ifappendix
    \newif\ifcontent
    
    \contenttrue
    \appendixtrue
	
	\ifcontent
	\begin{abstract}
    	We present a new family of exchangeable stochastic processes, the Functional Neural Processes (FNPs). FNPs model distributions over functions by learning a graph of dependencies on top of latent representations of the points in the given dataset. In doing so, they define a Bayesian model without explicitly positing a prior distribution over latent global parameters; they instead adopt priors over the relational structure of the given dataset, a task that is much simpler. We show how we can learn such models from data, demonstrate that they are scalable to large datasets through mini-batch optimization and describe how we can make predictions for new points via their posterior predictive distribution. We experimentally evaluate FNPs on the tasks of toy regression and image classification and show that, when compared to baselines that employ global latent parameters, they offer both competitive predictions as well as more robust uncertainty estimates. 
	\end{abstract}
	
	\section{Introduction}
	Neural networks are a prevalent paradigm for approximating functions of almost any kind. Their highly flexible parametric form coupled with large amounts of data allows for accurate modelling of the underlying task, a fact that usually leads to state of the art prediction performance. While predictive performance is definitely an important aspect, in a lot of safety critical applications, such as self-driving cars, we also require accurate uncertainty estimates about the predictions. 

Bayesian neural networks~\cite{mackay1995probable,neal1995bayesian,graves2011practical,blundell2015weight} have been an attempt at imbuing neural networks with the ability to model uncertainty; they posit a prior distribution over the weights of the network and through inference they can represent their uncertainty in the posterior distribution. Nevertheless, for such complex models, the choice of the prior is quite difficult since understanding the interactions of the parameters with the data is a non-trivial task. As a result, priors are usually employed for computational convenience and tractability. Furthermore, inference over the weights of a neural network can be a daunting task due to the high dimensionality and posterior complexity~\cite{louizos2017multiplicative,shi2017kernel}.

An alternative way that can ``bypass'' the aforementioned issues is that of adopting a stochastic process~\cite{klenke2013probability}. They posit distributions over functions, e.g. neural networks, directly, without the necessity of adopting prior distributions over global parameters, such as the neural network weights. Gaussian processes~\cite{rasmussen2003gaussian} (GPs) is a prime example of a stochastic process; they can encode any inductive bias in the form of a covariance structure among the datapoints in the given dataset, a more intuitive modelling task than positing priors over weights. Furthermore, for vanilla GPs, posterior inference is much simpler. Despite these advantages, they also have two main limitations: 1) the underlying model is not very flexible for high dimensional problems and 2) training and inference is quite costly since it generally scales cubically with the size of the dataset.

Given the aforementioned limitations of GPs, one might seek a more general way to parametrize stochastic processes that can bypass these issues. To this end, we present our main contribution, \emph{Functional Neural Processes} (FNPs), a family of exchangeable stochastic processes that posit distributions over functions in a way that combines the properties of neural networks and stochastic processes. We show that, in contrast to prior literature such as Neural Processes (NPs)~\cite{garnelo2018neural}, FNPs do not require explicit global latent variables in their construction, but they rather operate by building a graph of dependencies among local latent variables, reminiscing more of autoencoder type of latent variable models~\cite{kingma2013auto,rezende2014stochastic}. We further show that we can exploit the local latent variable structure in a way that allows us to easily encode inductive biases and illustrate one particular instance of this ability by designing an FNP model that behaves similarly to a GP with an RBF kernel. Furthermore, we demonstrate that FNPs are scalable to large datasets, as they can facilitate for minibatch gradient optimization of their parameters, and have a simple to evaluate and sample posterior predictive distribution. Finally, we evaluate FNPs on toy regression and image classification tasks and show that they can obtain competitive performance and more robust uncertainty estimates. We have open sourced an implementation of FNPs for both classification and regression along with example usages at \url{https://github.com/AMLab-Amsterdam/FNP}.

	\section{The Functional Neural Process}
	\begin{figure}[tb!]
	\centering
	\RawFloats
	\begin{minipage}[b]{.28\textwidth}
		\centering
		\raisebox{.5em}{\includegraphics[width=\linewidth]{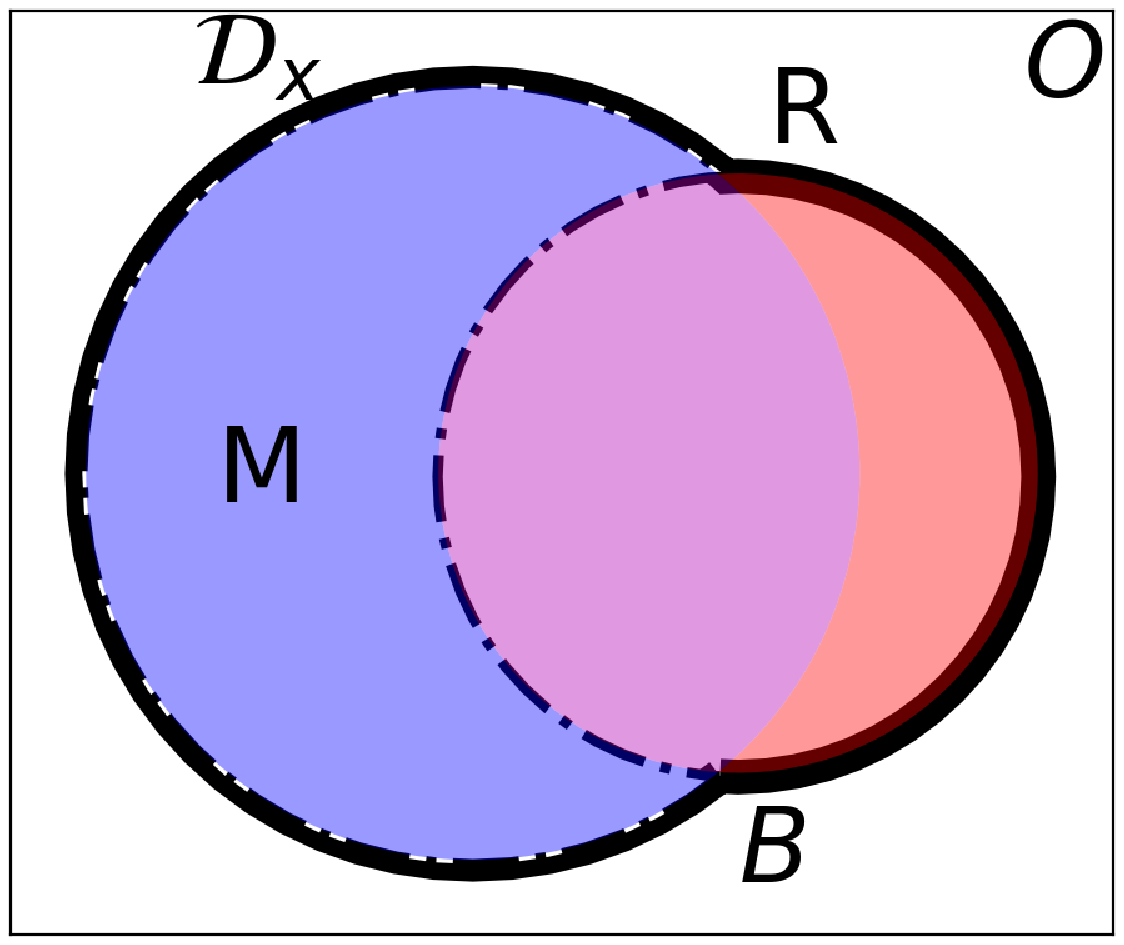}}
		\caption{Venn diagram of the sets used in this work. The blue is the training inputs $\mathcal{D}_x$, the red is the reference set $R$ and the parts enclosed in the dashed and solid lines are $M$, the training points not in $R$, and $B$, the union of the training points and $R$. The white background corresponds to $O$, the complement of $R$.} 
		\label{fig:sets_ngp}
	\end{minipage}\hfill%
	\begin{minipage}[b]{.69\textwidth}
		\centering
		\includegraphics[width=.95\linewidth]{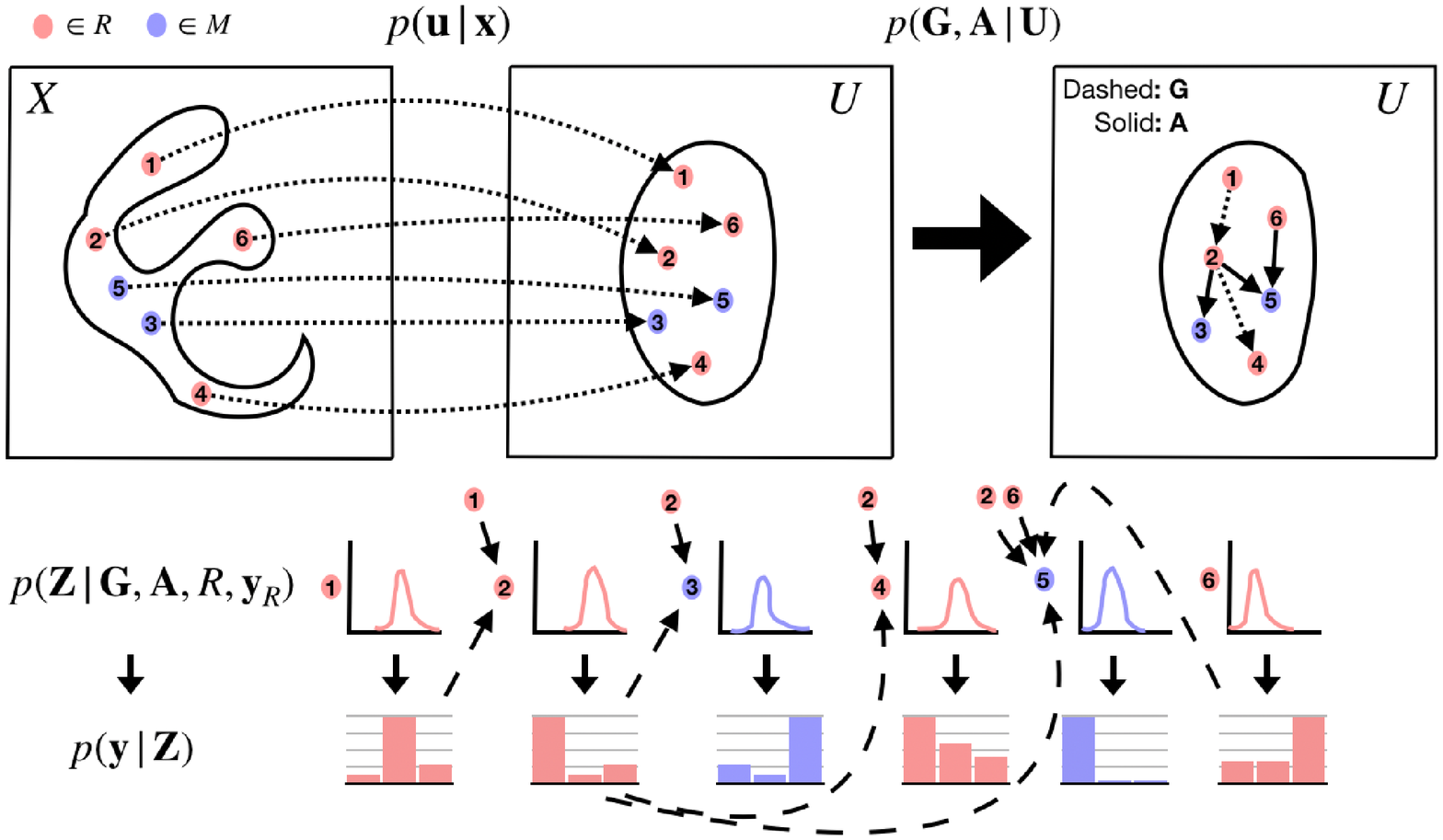}
		\caption{The Functional Neural Process (FNP) model. We embed the inputs (dots) from a complicated domain $\mathcal{X}$ to a simpler domain $U$ where we then sample directed graphs of dependencies among them, $\rmG, \rmA$. Conditioned on those graphs, we use the parents from the reference set $R$ as well as their labels $\rvy_R$ to parameterize a latent variable $\rvz_i$ that is used to predict the target $y_i$. Each of the points has a specific number id for clarity.} 
		\label{fig:model_ngp}
	\end{minipage}
\end{figure}

For the following we assume that we are operating in the supervised learning setup, where we are given tuples of points $(\rvx, y)$, with $\rvx \in \mathcal{X}$ being the input covariates and $y \in \mathcal{Y}$ being the given label. Let $\mathcal{D} = \{(\rvx_1, y_1) \dots, (\rvx_N, y_N)\}$ be a sequence of $N$ observed datapoints. We are interested in constructing a stochastic process that can bypass the limitations of GPs and can offer the predictive capabilities of neural networks. There are two necessary conditions that have to be satisfied during the construction of such a model: exchangeability and consistency~\citep{klenke2013probability}. An exchangeable distribution over $\mathcal{D}$ is a joint probability over these elements that is invariant to permutations of these points, i.e.
\begin{align}
	p(y_{1:N}| \rvx_{1:N}) = p(y_{\sigma(1:N)}| \rvx_{\sigma(1:N)}),
\end{align}
where $\sigma(\cdot)$ corresponds to the permutation function. Consistency refers to the phenomenon that the probability defined on an observed sequence of points $\{(\rvx_1, y_1), \dots, (\rvx_n, y_n)\}$, $p_n(\cdot)$, is the same as the probability defined on an extended sequence $\{(\rvx_1, y_1), \dots, (\rvx_n, y_n), \dots, (\rvx_{n+m}, y_{n+m})\}$, $p_{n+m}(\cdot)$, when we marginalize over the new points:
\begin{align}
	p_n(y_{1:n}| \rvx_{1:n}) = \int p_{n+m}(y_{1:n+m}| \rvx_{1:n+m}) \mathrm{d}y_{n+1:n+m}.
\end{align}
Ensuring that both of these conditions hold, allows us to invoke the Kolmogorov Extension and de-Finneti's theorems~\cite{klenke2013probability}, hence prove that the model we defined is an exchangeable stochastic process. In this way we can guarantee that there is an underlying Bayesian model with an implied prior over global latent parameters $p_\theta(\rvw)$ such that we can express the joint distribution in a conditional i.i.d. fashion, i.e. $p_\theta(y_1, \dots, y_N| \rvx_1, \dots, \rvx_N) = \int p_\theta(\rvw) \prod_{i=1}^N p(y_i| \rvx_i, \rvw) \mathrm{d}\rvw$.

This constitutes the main objective of this work; how can we parametrize and optimize such distributions? Essentially, our target is to introduce dependence among the points of $\mathcal{D}$ in a manner that respects the two aforementioned conditions. We can then encode prior assumptions and inductive biases to the model by considering the relations among said points, a task much simpler than specifying a prior over latent global parameters $p_\theta(\rvw)$. To this end, we introduce in the following our main contribution, the \emph{Functional Neural Process} (FNP).

\subsection{Designing the Functional Neural Process}\label{sec:des_ngp}
On a high level the FNP follows the construction of a stochastic process as described at~\cite{datta2016hierarchical}; it posits a distribution over functions $h \in \mathcal{H}$ from $\rvx$ to $y$  by first selecting a ``reference'' set of points from $\mathcal{X}$, and then basing the probability distribution over $h$ around those points. This concept is similar to the ``inducing inputs'' that are used in sparse GPs~\cite{snelson2006sparse,titsias2009variational}. More specifically, let $R =  \{\rvx^r_1, \dots, \rvx^r_K\}$ be such a reference set and let $O = \mathcal{X} \setminus R$ be the ``other'' set, i.e. the set of all possible points that are not in $R$. Now let $\mathcal{D}_x = \{\rvx_1, \dots, \rvx_N\}$ be any finite random set from $\mathcal{X}$, that constitutes our observed inputs. To facilitate the exposition we also introduce two more sets;  $M = \mathcal{D}_x \setminus R$ that contains the points of $\mathcal{D}_x$ that are from $O$ and $B = R \cup M$ that contains all of the points in $\mathcal{D}_x$ and $R$. We provide a Venn diagram in Fig.~\ref{fig:sets_ngp}. In the following we describe the construction of the model, shown in Fig.~\ref{fig:model_ngp}, and then prove that it corresponds to an infinitely exchangeable stochastic process. 

\paragraph{Embedding the inputs to a latent space} The first step of the FNP is to embed each of the $\rvx_i$ of $B$ independently to a latent representation $\rvu_i$
\begin{align}
	p_\theta(\rmU_B | \rmX_B) & = \prod_{i \in B} p_\theta(\rvu_i | \rvx_i),
\end{align}	
where $p_\theta(\rvu_i| \rvx_i)$ can be any distribution, e.g. a Gaussian or a delta peak, where its parameters, e.g. the mean and variance, are given by a function of $\rvx_i$. This function can be any function, provided that it is flexible enough to provide a meaningful representation for $\rvx_i$. For this reason, we employ neural networks, as their representational capacity has been demonstrated on a variety of complex high dimensional tasks, such as natural image generation and classification. 

\paragraph{Constructing a graph of dependencies in the embedding space} 
The next step is to construct a dependency graph among the points in $B$; it encodes the correlations among the points in $\mathcal{D}$ that arise in the stochastic process. For example, in GPs such a correlation structure is encoded in the covariance matrix according to a kernel function $g(\cdot, \cdot)$ that measures the similarity between two inputs. In the FNP we adopt a different approach. Given the latent embeddings $\rmU_B$ that we obtained in the previous step we construct two directed graphs of dependencies among the points in $B$; a directed acyclic graph (DAG) $\rmG$ among the points in $R$ and a bipartite graph $\rmA$ from $R$ to $M$. These graphs are represented as random binary adjacency matrices, where e.g. $\rmA_{ij} = 1$ corresponds to the vertex $j$ being a parent for the vertex $i$. The distribution of the bipartite graph can be defined as 
\begin{align}
	p(\rmA | \rmU_R, \rmU_M) & = \prod_{i \in M}\prod_{j \in R}\text{Bern}\left(\rmA_{ij}| g(\rvu_i, \rvu_j)\right).
\end{align} 
where $g(\rvu_i, \rvu_j)$ provides the probability that a point $i \in M$ depends on a point $j$ in the reference set $R$. This graph construction reminisces graphon~\cite{orbanz2015bayesian} models, with however two important distinctions. Firstly, the embedding of each node is a vector rather than a scalar and secondly, the prior distribution over $\rvu$ is conditioned on an initial vertex representation $\rvx$ rather than being the same for all vertices. We believe that the latter is an important aspect, as it is what allows us to maintain enough information about the vertices and construct more informative graphs.

The DAG among the points in $R$ is a bit trickier, as we have to adopt a topological ordering of the vectors in $\rmU_R$ in order to avoid cycles. Inspired by the concept of stochastic orderings~\cite{shaked2007stochastic}, we define an ordering according to a parameter free scalar projection $t(\cdot)$ of $\rvu$, i.e. $\rvu_i > \rvu_j$ when $t(\rvu_i) > t(\rvu_j)$. The function $t(\cdot)$ is defined as $t(\rvu_i) = \sum_k t_k(\rvu_{ik})$ where each individual $t_k(\cdot)$ is a monotonic function (e.g. the log CDF of a standard normal distribution); in this case we can guarantee that $\rvu_i > \rvu_j$ when individually for all of the dimensions $k$ we have that $\rvu_{ik} > \rvu_{jk}$ under $t_k(\cdot)$. This ordering can then be used in
\begin{align}
	p(\rmG| \rmU_R) & = \prod_{i \in R}\prod_{j \in R, j\neq i}\text{Bern}\left(\rmG_{ij}| \mathbb{I}[t(\rvu_i) > t(\rvu_j)] g(\rvu_i, \rvu_j)\right)
\end{align}
which leads into random adjacency matrices $\rmG$ that can be re-arranged into a triangular structure with zeros in the diagonal (i.e. DAGs). In a similar manner, such a DAG construction reminisces of digraphon models~\cite{cai2016priors}, a generalization of graphons to the directed case. The same two important distinctions still apply; we are using vector instead of scalar representations and the prior over the representation of each vertex $i$ depends on $\rvx_i$. It is now straightforward to bake in any relational inductive biases that we want our function to have by appropriately defining the $g(\cdot, \cdot)$ that is used for the construction of $\rmG$ and $\rmA$. For example, we can encode an inductive bias that neighboring points should be dependent by choosing $g(\rvu_i, \rvu_j) = \exp\left(-\frac{\tau}{2} \|\rvu_i - \rvu_j\|^2 \right)$. This what we used in practice. We provide examples of the $\rmA$, $\rmG$ that FNPs learn in Figures~\ref{fig:vis_A},~\ref{fig:vis_G} respectively.

\begin{figure}[tb!]
	\centering
	\RawFloats
	\begin{minipage}[b]{.47\textwidth}
		\centering
	    \begin{subfigure}[t]{.45\textwidth}
		    \centering
		    \includegraphics[height=2.8cm, width=.95\textwidth]{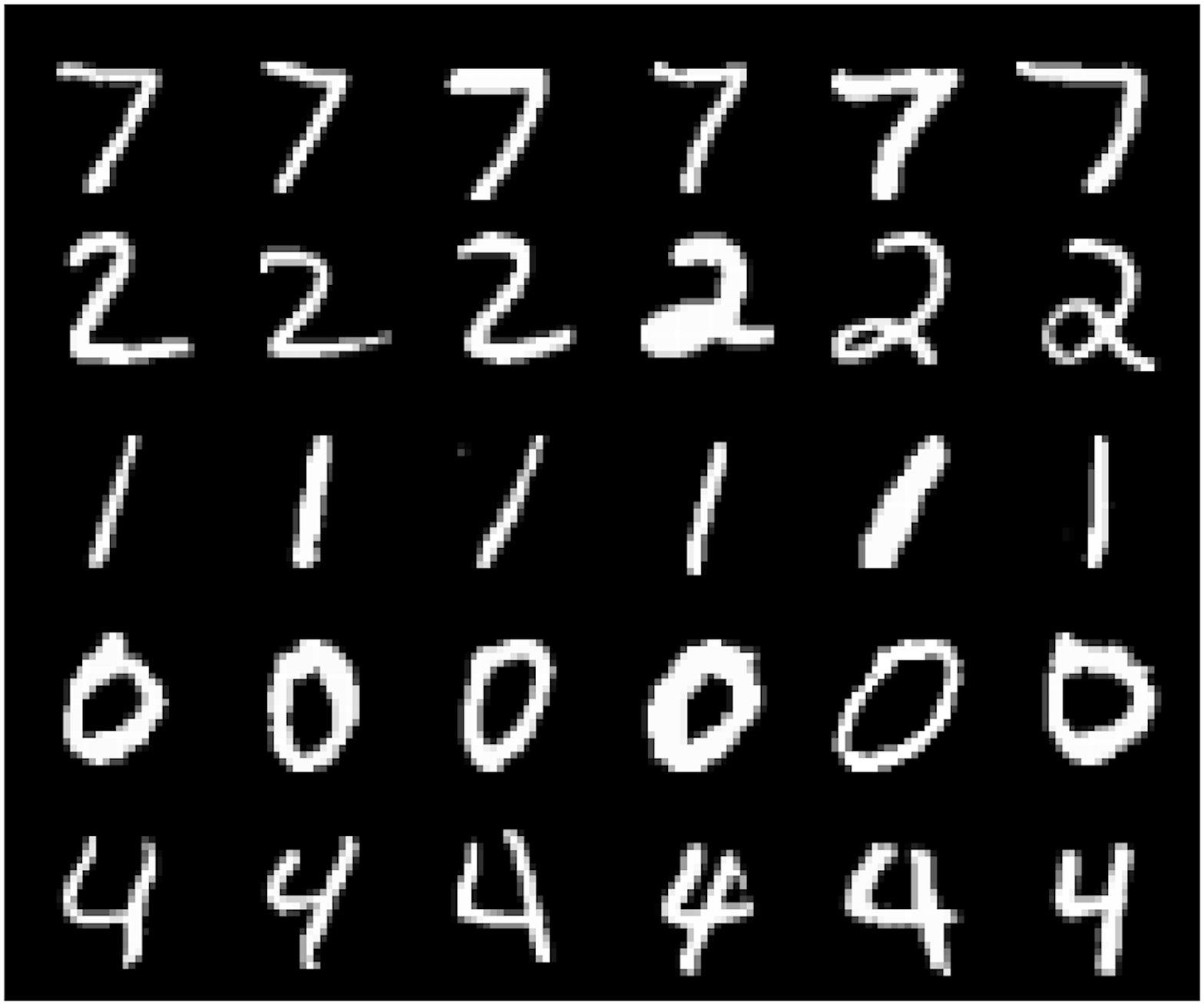}
	    \end{subfigure}%
	    \begin{subfigure}[t]{.55\textwidth}
		    \centering
		    \includegraphics[height=3.4cm, width=\textwidth]{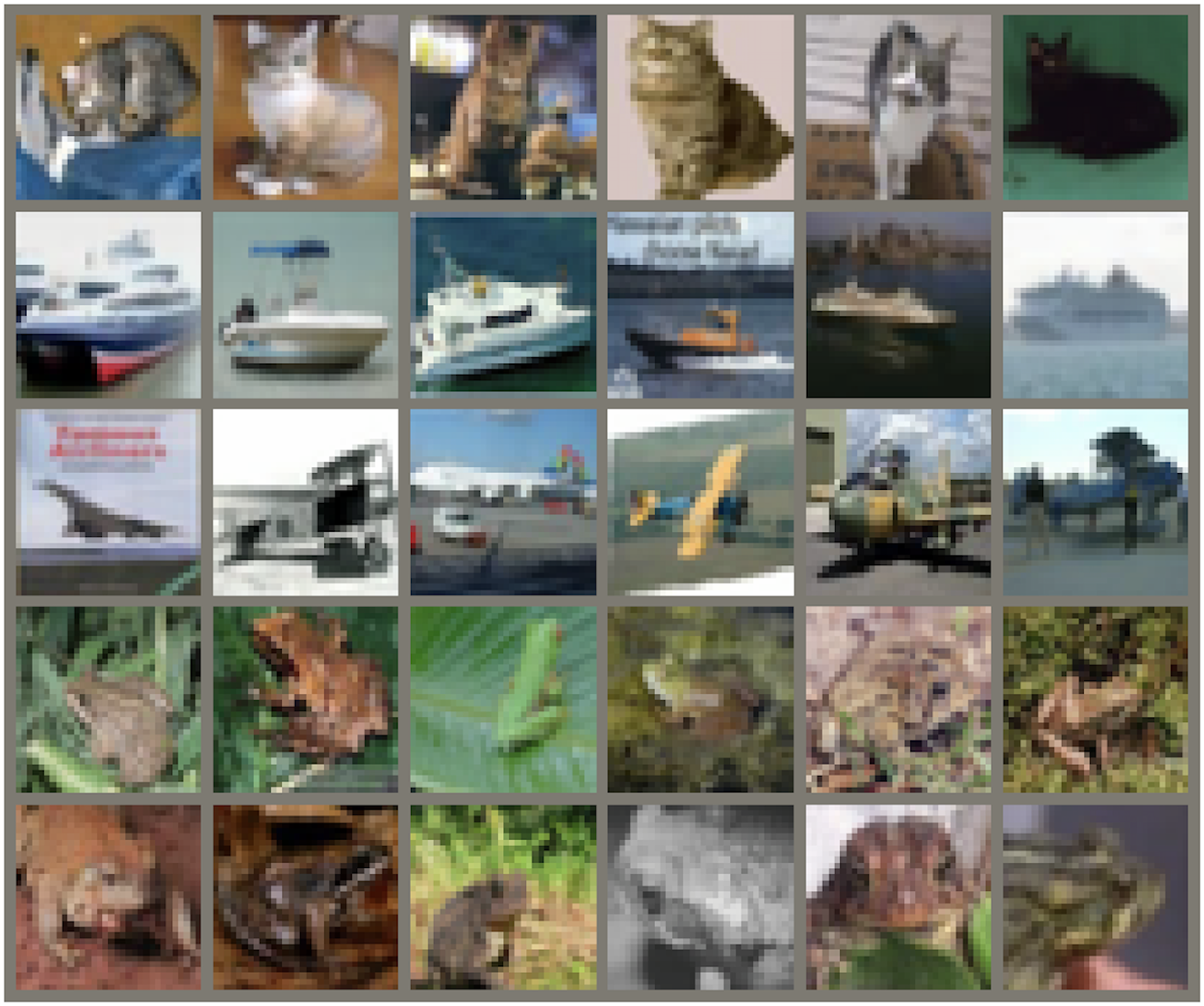}
	    \end{subfigure}
	    \caption{An example of the bipartite graph $\rmA$ that the FNP learns. The first column of each image is a query point and the rest are the five most probable parents from the $R$. We can see that the FNP associates same class inputs.} 
	    \label{fig:vis_A}
	\end{minipage}\hfill%
	\begin{minipage}[b]{.5\textwidth}
		\centering
		\includegraphics[height=3.8cm, width=\textwidth]{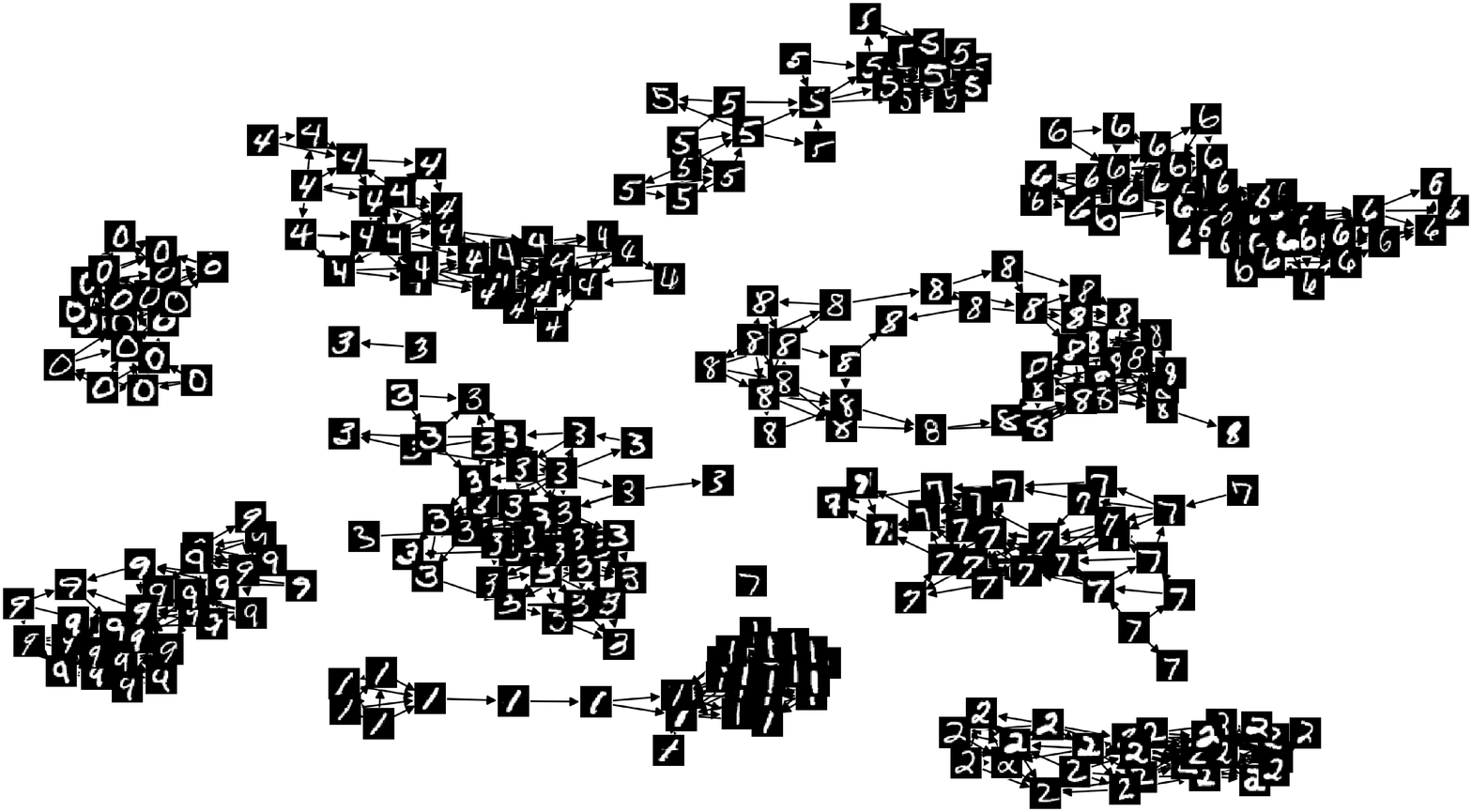}
		\caption{A DAG over $R$ on MNIST, obtained after propagating the means of $\rmU$ and thresholding edges that have less than $0.5$ probability in $\rmG$. We can see that FNP learns a meaningful $\rmG$ by connecting points that have the same class.} 
		\label{fig:vis_G}
	\end{minipage}
\end{figure}

\paragraph{Parametrizing the predictive distribution} Having obtained the dependency graphs $\rmA, \rmG$, we are now interested in how to construct a predictive model that induces them. To this end, we parametrize predictive distributions for each target variable $y_i$ that explicitly depend on the reference set $R$ according to the structure of $\rmG$ and $\rmA$. This is realized via a local latent variable $\rvz_i$ that summarizes the context from the selected parent points in $R$ and their targets $\rvy_R$ 
\begin{align}
	& \int p_\theta(\rvy_B, \rmZ_B| R,  \rmG, \rmA)\mathrm{d}\rmZ_B 
	= \int p_\theta(\rvy_R, \rmZ_R| R, \rmG)\mathrm{d}\rmZ_R \int p_\theta(\rvy_M, \rmZ_M| R, \rvy_R, \rmA)\mathrm{d}\rmZ_M\nonumber\\
	& = \prod_{i \in R} \int p_\theta\left(\rvz_i| \text{par}_{\rmG_i}(R, \rvy_R)\right)p_\theta(y_i| \rvz_i) \mathrm{d}\rvz_i \prod_{j \in M}\int p_\theta\left(\rvz_j| \text{par}_{\rmA_j}(R, \rvy_R)\right)p_\theta(y_j| \rvz_j) \mathrm{d}\rvz_j\label{eq:ref_pyz}
\end{align}
where $\text{par}_{\rmG_i}(\cdot), \text{par}_{\rmA_j}(\cdot)$ are functions that return the parents of the point $i$, $j$ according to $\rmG, \rmA$ respectively. Notice that we are guaranteed that the decomposition to the conditionals at Eq.~\ref{eq:ref_pyz} is valid, since the DAG $\rmG$ coupled with $\rmA$ correspond to another DAG. Since permutation invariance in the parents is necessary for an overall exchangeable model, we define each distribution over $\rvz$, e.g. $p\left(\rvz_i| \text{par}_{\rmA_i}(R, \rvy_R)\right)$, as an independent Gaussian distribution per dimension $k$ of $\rvz$\footnote{The factorized Gaussian distribution was chosen for simplicity, and it is not a limitation. Any distribution is valid for $\rvz$ provided that it defines a permutation invariant probability density w.r.t. the parents.}
\begin{align}	
	p_\theta\left(\rvz_{ik}| \text{par}_{\rmA_i}(R, \rvy_R)\right) = \mathcal{N}\left(\rvz_{ik}\bigg| C_i\sum_{j \in R}\rmA_{ij} \mu_\theta(\rvx^r_j, y^r_j)_k, \exp\left(C_i\sum_{j \in R} \rmA_{ij} \nu_\theta(\rvx^r_j, y^r_j)_k\right)\right)\label{eq:pz}
\end{align}
where the $\mu_\theta(\cdot, \cdot)$ and $\nu_\theta(\cdot, \cdot)$ are vector valued functions with a codomain in $\mathbb{R}^{|z|}$ that transform the data tuples of $R, \rvy_R$. The $C_i$ is a normalization constant with $C_i = (\sum_j A_{ij} + \epsilon)^{-1}$, i.e. it corresponds to the reciprocal of the number of parents of point $i$, with an extra small $\epsilon$ to avoid division by zero when a point has no parents. 
By observing Eq.~\ref{eq:ref_pyz} we can see that the prediction for a given $y_i$ depends on the input covariates $\rvx_i$ only indirectly via the graphs $\rmG, \rmA$ which are a function of $\rvu_i$. Intuitively, it encodes the inductive bias that predictions on points that are ``far away'', i.e. have very small probability of being connected to the reference set via $\rmA$, will default to an uninformative standard normal prior over $\rvz_i$ hence a  constant prediction for $y_i$. This is similar to the behaviour that GPs with RBF kernels exhibit. 

Nevertheless, Eq.~\ref{eq:ref_pyz} can also hinder extrapolation, something that neural networks can do well. In case extrapolation is important, we can always add a direct path by conditioning the prediction on $\rvu_i$, the latent embedding of $\rvx_i$, i.e. $p(y_i| \rvz_i, \rvu_i)$. This can serve as a middle ground where we can allow some extrapolation via $\rvu$. In general, it provides a knob, as we can now interpolate between GP and neural network behaviours by e.g. changing the dimensionalities of $\rvz$ and $\rvu$.

\paragraph{Putting everything together: the FNP and FNP$^+$ models} Now by putting everything together we arrive at the overall definitions of the two FNP models that we propose
\begin{align}
	\mathcal{FNP}_\theta(\mathcal{D}) & : = \sum_{\rmG, \rmA}\int p_\theta(\rmU_B | \rmX_B)p(\rmG, \rmA| \rmU_B) p_\theta(\rvy_B, \rmZ_B| R, \rmG, \rmA) \mathrm{d}\rmU_B \mathrm{d}\rmZ_B \mathrm{d}y_{i \in R \setminus \mathcal{D}_x}, \label{eq:proc_ngpz}\\
	\mathcal{FNP}^+_\theta(\mathcal{D}) & : = \sum_{\rmG, \rmA}\int p_\theta(\rmU_B, \rmG, \rmA| \rmX_B)p_\theta(\rvy_B, \rmZ_B|R, \rmU_B, \rmG, \rmA)\mathrm{d}\rmU_B \mathrm{d}\rmZ_B \mathrm{d}y_{i \in R \setminus \mathcal{D}_x}, \label{eq:proc_ngpzu}
\end{align}
where the first makes predictions according to Eq.~\ref{eq:ref_pyz} and the second further conditions on $\rvu$.
Notice that besides the marginalizations over the latent variables and graphs, we also marginalize over any of the points in the reference set that are not part of the observed dataset $\mathcal{D}$. This is necessary for the proof of consistency that we provide later. For this work, we always chose the reference set to be a part of the dataset $\mathcal{D}$ so the extra integration is omitted. In general, the marginalization can provide a mechanism to include unlabelled data to the model which could be used to e.g. learn a better embedding $\rvu$ or ``impute'' the missing labels. We leave the exploration of such an avenue for future work. Having defined the models at Eq.~\ref{eq:proc_ngpz},~\ref{eq:proc_ngpzu} we now prove that they both define valid permutation invariant stochastic processes by borrowing the methodology described at~\cite{datta2016hierarchical}.

\begin{proposition}
	The distributions defined at Eq.~\ref{eq:proc_ngpz},~\ref{eq:proc_ngpzu} are valid permutation invariant stochastic processes, hence they correspond to Bayesian models.
\end{proposition}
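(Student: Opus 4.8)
The plan is to verify the two hypotheses behind the Kolmogorov extension theorem — permutation invariance (exchangeability) and marginalization consistency — and then appeal to de~Finetti's theorem for the existence of an implied prior over global parameters. The argument follows the recipe of~\cite{datta2016hierarchical}, and its backbone is a single structural observation about both constructions, Eq.~\ref{eq:proc_ngpz} and Eq.~\ref{eq:proc_ngpzu}: once we condition on the reference block, i.e.\ on $\rmU_R$, the DAG $\rmG$, the reference latents $\rmZ_R$ and the labels $\rvy_R$, every point outside $R$ is processed by an independent and identical conditional mechanism (embed $\rvx_i\mapsto\rvu_i$; draw its row of $\rmA$ from $g$; draw $\rvz_i$ from the parent-average of Eq.~\ref{eq:pz}; draw $y_i$ from $p_\theta(y_i\mid\rvz_i)$, or from $p_\theta(y_i\mid\rvz_i,\rvu_i)$ in the $\mathcal{FNP}^{+}$ case). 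Consequently all dependence on the ambient ordering of $\mathcal{D}$ enters only through which indices land in $M$ and in what order their factors are listed.

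First I would establish permutation invariance. The embedding $p_\theta(\rmU_B\mid\rmX_B)=\prod_{i\in B}p_\theta(\rvu_i\mid\rvx_i)$ is a product of per-point factors, hence symmetric; $p(\rmG\mid\rmU_R)$ does not see the dataset order at all, since its topological constraint $\mathbb{I}[t(\rvu_i)>t(\rvu_j)]$ is a fixed deterministic functional of the embeddings, so the same random $\rmG$ is obtained regardless of how (or whether) points of $R$ appear in $\mathcal{D}$; $p(\rmA\mid\rmU_R,\rmU_M)$ only permutes its rows under a permutation of $M$; and the predictive term of Eq.~\ref{eq:ref_pyz} is a product over points in which each factor $p_\theta\!\left(\rvz_i\mid\text{par}(R,\rvy_R)\right)$ is, by the explicit form of Eq.~\ref{eq:pz} (a sum over parents with the symmetric normalizer $C_i$), invariant to reordering of the parents. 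Composing these, a permutation $\sigma$ of $\mathcal{D}$ merely relabels the integration variables and permutes rows of $\rmA$; the value of the sum over $\rmG,\rmA$ and integral over $\rmU_B,\rmZ_B$ (and the unobserved reference labels) is unchanged, giving $p(y_{1:N}\mid\rvx_{1:N}) = p(y_{\sigma(1:N)}\mid\rvx_{\sigma(1:N)})$.

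Next I would check consistency, i.e.\ that marginalizing the labels of appended points $(\rvx_{N+1},y_{N+1}),\dots,(\rvx_{N+m},y_{N+m})$ recovers the smaller model. Two cases arise. If an appended input is not in $R$, it joins $M$; since nothing else in the joint depends on its $y$, $\rvz$, $\rvu$, or on its row of $\rmA$, summing over that row and integrating out those variables contributes exactly a factor $1$ and returns the original joint. If an appended input already lies in $R$, it was part of the reference block $B$ in both models, and the only change is that its label migrates from the marginalized block $\{y_i : i\in R\setminus\mathcal{D}_x\}$ into the observed block; because that label had already been integrated out in the smaller model, re-integrating it over the larger model reproduces it verbatim. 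This is exactly the role of the explicit $\mathrm{d}y_{i\in R\setminus\mathcal{D}_x}$ term in Eq.~\ref{eq:proc_ngpz} and Eq.~\ref{eq:proc_ngpzu}. With both properties in hand, Kolmogorov extension produces a stochastic process over $\mathcal{X}$, and exchangeability together with de~Finetti yields the prior $p_\theta(\rvw)$ and the conditionally-i.i.d.\ representation $p_\theta(y_{1:N}\mid\rvx_{1:N})=\int p_\theta(\rvw)\prod_i p(y_i\mid\rvx_i,\rvw)\,\mathrm{d}\rvw$, i.e.\ a Bayesian model.

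The main obstacle I anticipate is bookkeeping around the reference set rather than any analytic subtlety: one must ensure that the order induced by $t(\cdot)$ is genuinely global (the same on every finite subset, which holds because $t$ is a fixed coordinatewise-monotone projection, so $\rmG$ is intrinsic to $(R,\rmU_R)$), that reference points are never double-counted when they happen to belong to $\mathcal{D}_x$, and that the deterministic gadgets $C_i$ and the $\epsilon$-regularizer spoil neither the symmetry nor the normalization used in the marginalization step — they do not, being fixed functions of the sampled graphs. Checking the $\mathcal{FNP}^{+}$ variant alongside is immediate, since additionally conditioning on $\rvu_i$ in $p_\theta(y_i\mid\rvz_i,\rvu_i)$ keeps each point's factor a function of that point alone.
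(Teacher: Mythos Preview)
Your proposal is correct and follows essentially the same approach as the paper's proof: permutation invariance is obtained by observing that each component of the joint is permutation equivariant so the products are invariant, and consistency is verified by the same two-case split (appended point in $R$ versus not in $R$), with the non-$R$ case handled as a leaf that integrates to $1$ and the $R$ case handled by the built-in marginalization $\mathrm{d}y_{i\in R\setminus\mathcal{D}_x}$. Your write-up is in fact slightly more explicit than the paper's on why the ordering induced by $t(\cdot)$ is global and why the normalizer $C_i$ preserves parent-symmetry, but these are elaborations of the same argument rather than a different route.
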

\begin{hproof}
	The full proof can be found in the Appendix. Permutation invariance can be proved by noting that each of the terms in the products are permutation equivariant w.r.t. permutations of $\mathcal{D}$ hence each of the individual distributions defined at Eq.~\ref{eq:proc_ngpz},~\ref{eq:proc_ngpzu} are permutation invariant due to the products. To prove consistency we have to consider two cases~\cite{datta2016hierarchical}, the case where we add a point that is part of $R$ and the case where we add one that is not part of $R$. In the first case, marginalizing out that point will lead to the same distribution (as we were marginalizing over that point already), whereas in the second case the point that we are adding is a leaf in the dependency graph, hence marginalizing it doesn't affect the other points.
\end{hproof}

\subsection{The FNPs in practice: fitting and predictions}
Having defined the two models, we are now interested in how we can fit their parameters $\theta$  when we are presented with a dataset $\mathcal{D}$, as well as how to make predictions for novel inputs $\rvx^*$. For simplicity, we assume that $R \subseteq \mathcal{D}_x$ and focus on the FNP as the derivations for the FNP$^+$ are analogous. Notice that in this case we have that $B = \mathcal{D}_x = \rmX_\mathcal{D}$. 

\paragraph{Fitting the model to data} Fitting the model parameters with maximum marginal likelihood is difficult, as the necessary integrals / sums of Eq.\ref{eq:proc_ngpz} are intractable. For this reason, we employ variational inference and maximize the following lower bound to the marginal likelihood of $\mathcal{D}$
\begin{align}
	\mathcal{L} & = \E_{q_\phi(\rmU_\mathcal{D}, \rmG, \rmA, \rmZ_\mathcal{D}| \rmX_\mathcal{D})}[\log p_\theta(\rmU_\mathcal{D}, \rmG, \rmA, \rmZ_\mathcal{D}, \rvy_\mathcal{D}| \rmX_\mathcal{D}) - \log q_\phi(\rmU_\mathcal{D}, \rmG, \rmA, \rmZ_\mathcal{D}| \rmX_\mathcal{D})],
\end{align}
with respect to the model parameters $\theta$ and variational parameters $\phi$. For a tractable lower bound, we assume that the variational posterior distribution $q_\phi(\rmU_\mathcal{D}, \rmG, \rmA, \rmZ_\mathcal{D}| \rmX_\mathcal{D})$ factorizes as $p_\theta(\rmU_\mathcal{D}| \rmX_\mathcal{D}) p(\rmG|\rmU_R) p(\rmA|\rmU_\mathcal{D})q_\phi(\rmZ_\mathcal{D}|\rmX_\mathcal{D})$ with  $q_\phi(\rmZ_\mathcal{D}|\rmX_\mathcal{D}) = \prod_{i=1}^{|\mathcal{D}|}q_\phi(\rvz_i|\rvx_i)$. This leads to 
\begin{align}
    &\mathcal{L}_R + \mathcal{L}_{M|R} = \E_{p_\theta(\rmU_R, \rmG|\rmX_R)q_\phi(\rmZ_R|\rmX_R)}[\log p_\theta(\rvy_R, \rmZ_R| R, \rmG) - \log q_\phi(\rmZ_R|\rmX_R)] + \label{eq:ngp_bound}\\  &  + \E_{p_\theta(\rmU_\mathcal{D}, \rmA|\rmX_\mathcal{D})q_\phi(\rmZ_M|\rmX_M)} [\log p_\theta(\rvy_M| \rmZ_M) + \log p_\theta\left(\rmZ_M|\text{par}_{\rmA}(R, \rvy_R)\right) - \log q_\phi(\rmZ_M|\rmX_M)] \nonumber 
\end{align}
where we decomposed the lower bound into the terms for the reference set $R$, $\mathcal{L}_R$, and the terms that correspond to $M$, $\mathcal{L}_{M|R}$. For large datasets $\mathcal{D}$ we are interested in doing efficient optimization of this bound. While the first term is not, in general, amenable to minibatching, the second term is. As a result, we can use minibatches that scale according to the size of the reference set $R$. We provide more details in the Appendix. 

In practice, for all of the distributions over $\rvu$ and $\rvz$, we use diagonal Gaussians, whereas for $\rmG, \rmA$ we use the concrete / Gumbel-softmax relaxations~\cite{maddison2016concrete,jang2016categorical} during training. In this way we can jointly optimize $\theta, \phi$ with gradient based optimization by employing the pathwise derivatives obtained with the reparametrization trick~\cite{kingma2013auto, rezende2014stochastic}. Furthermore, we tie most of the parameters $\theta$ of the model and $\phi$ of the inference network, as the regularizing nature of the lower bound can alleviate potential overfitting of the model parameters $\theta$. More specifically, for $p_\theta(\rvu_i|\rvx_i)$, $q_\phi(\rvz_i|\rvx_i)$ we share a neural network torso and have two output heads, one for each distribution. We also parametrize the priors over the latent $\rvz$ in terms of the $q_\phi(\rvz_i|\rvx_i)$ for the points in $R$; the $\mu_\theta(\rvx^r_i, y^r_i), \nu_\theta(\rvx^r_i, y^r_i)$ are both defined as $\mu_q(\rvx^r_i) + \mu_y^r$, $\nu_q(\rvx^r_i) + \nu_y^r$, where $\mu_q(\cdot), \nu_q(\cdot)$ are the functions that provide the mean and variance for $q_\phi(\rvz_i|\rvx_i)$ and $\mu_y^r, \nu_y^r$ are linear embeddings of the labels. 

It is interesting to see that the overall bound at Eq.~\ref{eq:ngp_bound} reminisces the bound of a latent variable model such as a variational autoencoder (VAE)~\cite{kingma2013auto,rezende2014stochastic} or a deep variational information bottleneck model (VIB)~\cite{alemi2016deep}. We aim to predict the label $y_i$ of a given point $\rvx_i$ from its latent code $\rvz_i$ where the prior, instead of being globally the same as in~\cite{kingma2013auto,rezende2014stochastic,alemi2016deep}, it is conditioned on the parents of that particular point. The conditioning is also intuitive, as it is what converts the i.i.d. to the more general exchangeable model. This is also similar to the VAE for unsupervised learning described at associative compression networks (ACN)~\cite{graves2018associative} and reminisces works on few-shot learning~\cite{bartunov2018few}.

\paragraph{The posterior predictive distribution} 
In order to perform predictions for unseen points $\rvx^*$, we employ the posterior predictive distribution of FNPs. More specifically, we can show that by using Bayes rule, the predictive distribution of the FNPs has the following simple form
\begin{align}
	\sum_{\rva^*}\int p_\theta(\rmU_R, \rvu^*|\rmX_R, \rvx^*) p(\rva^*| \rmU_R, \rvu^*) p_\theta(\rvz^*|\text{par}_{\rva^*}(R,\rvy_R)) p_\theta(y^*|\rvz^*)\mathrm{d}\rmU_R\mathrm{d}\rvu^*\mathrm{d}\rvz^*
\end{align}
where $\rvu$ are the representations given by the neural network and $\rva^*$ is the binary vector that denotes which points from $R$ are the parents of the new point. We provide more details in the Appendix. 
Intuitively, we first project the reference set and the new point on the latent space $\rvu$ with a neural network and then make a prediction $y^*$ by basing it on the parents from $R$ according to $\rva^*$. This predictive distribution reminisces the models employed in few-shot learning~\cite{vinyals2016matching}.
	
	\section{Related work}
	There has been a long line of research in Bayesian Neural Networks (BNNs)~\cite{graves2011practical,blundell2015weight,kingma2015variational,hernandez2015probabilistic,louizos2017multiplicative,shi2017kernel}. A lot of works have focused on the hard task of posterior inference for BNNs, by positing more flexible posteriors~\cite{louizos2017multiplicative,shi2017kernel,louizos2016structured,zhang2017noisy,bae2018eigenvalue}. The exploration of more involved priors has so far not gain much traction, with the exception of a handful of works~\cite{kingma2015variational,louizos2017bayesian,atanov2018deep,hafner2018reliable}. For flexible stochastic processes, we have a line of works that focus on (scalable) Gaussian Processes (GPs); these revolve around sparse GPs~\cite{snelson2006sparse,titsias2009variational}, using neural networks to parametrize the kernel of a GP~\cite{wilson2016deep,wilson2016stochastic}, employing finite rank approximations to the kernel~\cite{cutajar2017random,hensman2017variational} or parametrizing kernels over structured data~\cite{mattos2015recurrent,van2017convolutional}. Compared to such approaches, FNPs can in general be more scalable due to not having to invert a matrix for prediction and, furthermore, they can easily support arbitrary likelihood models (e.g. for discrete data) without having to consider appropriate transformations of a base Gaussian distribution (which usually requires further approximations).

There have been interesting recent works that attempt to merge stochastic processes and neural networks. Neural Processes (NPs)~\cite{garnelo2018neural} define distributions over global latent variables in terms of subsets of the data, while Attentive NPs~\cite{kim2019attentive} extend NPs with a deterministic path that has a cross-attention mechanism among the datapoints. In a sense, FNPs can be seen as a variant where we discard the global latent variables and instead incorporate cross-attention in the form of a dependency graph among local latent variables. Another line of works is the Variational Implicit Processes (VIPs)~\cite{ma2018variational}, which consider BNN priors and then use GPs for inference, and functional variational BNNs (fBNNs)~\cite{sun2019functional}, which employ GP priors and use BNNs for inference. Both methods have their drawbacks, as with VIPs we have to posit a meaningful prior over global parameters and the objective of fBNNs does not always correspond to a bound of the marginal likelihood. Finally, there is also an interesting line of works that study wide neural networks with random Gaussian parameters and discuss their equivalences with Gaussian Processes~\cite{novak2018bayesian,lee2019wide}, as well as the resulting kernel~\cite{jacot2018neural}. 

Similarities can be also seen at other works; Associative Compression Networks (ACNs)~\cite{graves2018associative} employ similar ideas for generative modelling with VAEs and conditions the prior over the latent variable of a point to its nearest neighbors. Correlated VAEs~\cite{tang2019correlated} similarly employ a (a-priori known) dependency structure across the latent variables of the points in the dataset. In few-shot learning, metric-based approaches~\cite{vinyals2016matching,bartunov2018few,sung2018learning,snell2017prototypical,koch2015siamese} similarly rely on similarities w.r.t. a reference set for predictions. 
	
	\section{Experiments}
	We performed two main experiments in order to verify the effectiveness of FNPs. We implemented and compared against 4 baselines: a standard neural network (denoted as NN), a neural network trained and evaluated with Monte Carlo (MC) dropout~\cite{gal2016dropout} and a Neural Process (NP)~\cite{garnelo2018neural} architecture. The architecture of the NP was designed in a way that is similar to the FNP. For the first experiment we explored the inductive biases we can encode in FNPs by visualizing the predictive distributions in toy 1d regression tasks. For the second, we measured the prediction performance and uncertainty quality that FNPs can offer on the benchmark image classification tasks of MNIST and CIFAR 10. For this experiment, we also implemented and compared against a Bayesian neural network trained with variational inference~\cite{blundell2015weight}. We provide the experimental details in the Appendix.

For all of the experiments in the paper, the NP was trained in a way that mimics the FNP, albeit we used a different set $R$ at every training iteration in order to conform to the standard NP training regime. More specifically, a random amount from $3$ to $num(R)$ points were selected as a context from each batch, with $num(R)$ being the maximum amount of points allocated for $R$. For the toy regression task we set $num(R) = N-1$.

\paragraph{Exploring the inductive biases in toy regression}
To visually access the inductive biases we encode in the FNP we experiment with two toy 1-d regression tasks described at~\cite{osband2016deep} and~\cite{hernandez2015probabilistic} respectively. The generative process of the first corresponds to drawing 12 points from $U[0, 0.6]$, 8 points from $U[0.8, 1]$ and then parametrizing the target as $y_i = x_i + \epsilon + \sin(4 (x_i + \epsilon)) + \sin(13 (x_i + \epsilon))$ with $\epsilon \sim \mathcal{N}(0, 0.03^2)$. This generates a nonlinear function with ``gaps'' in between the data where we, ideally, want the uncertainty to be high. For the second we sampled 20 points from $U[-4, 4]$ and then parametrized the target as $y_i = x_i^3 + \epsilon$, where $\epsilon \sim \mathcal{N}(0, 9)$. 
For all of the models we used a heteroscedastic noise model. Furthermore, due to the toy nature of this experiment, we also included a Gaussian Process (GP) with an RBF kernel. 
We used $50$ dimensions for the  global latent of NP for the first task and $10$ dimensions for the second. For the FNP models we used $3, 50$ dimensions for the $\rvu,\rvz$ for the first task and $3, 10$ for the second. For the reference set $R$ we used 10 random points for the FNPs and the full dataset for the NP.

\begin{figure*}[htb!]
	\centering
	\begin{subfigure}[t]{.2\textwidth}
		\includegraphics[width=\linewidth]{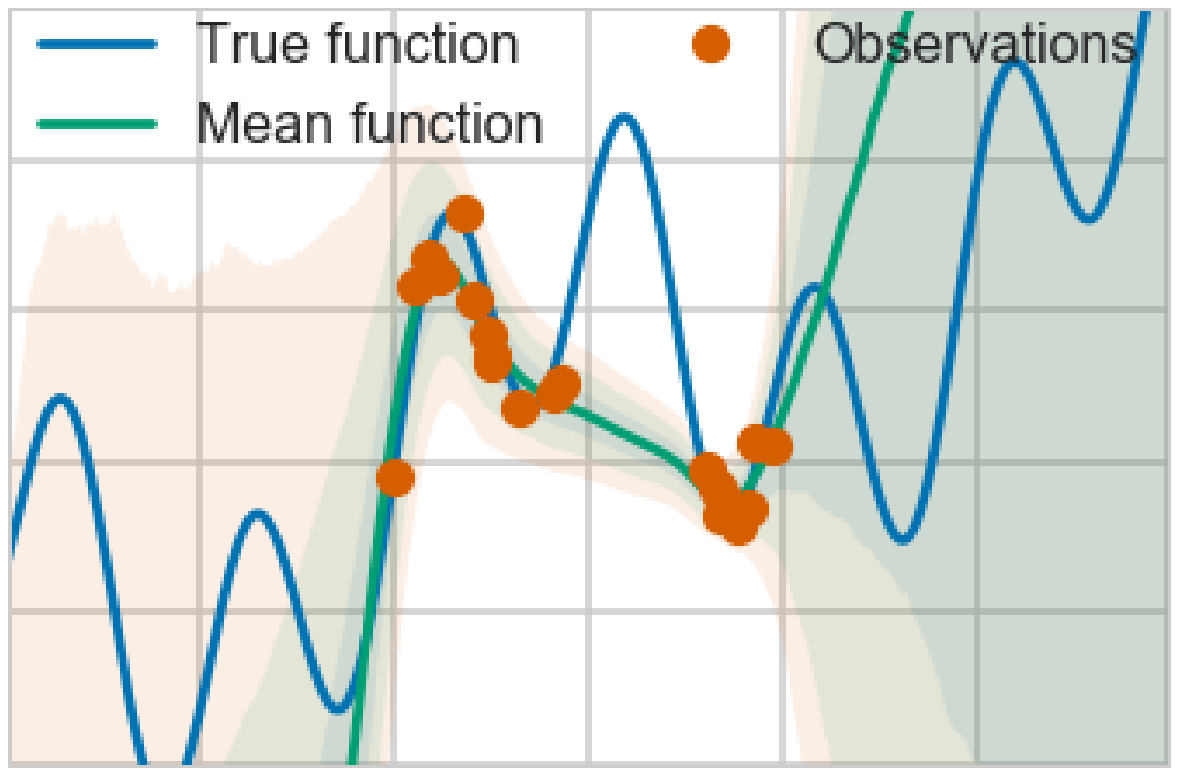}
	\end{subfigure}%
	\begin{subfigure}[t]{.2\textwidth}
		\includegraphics[width=\linewidth]{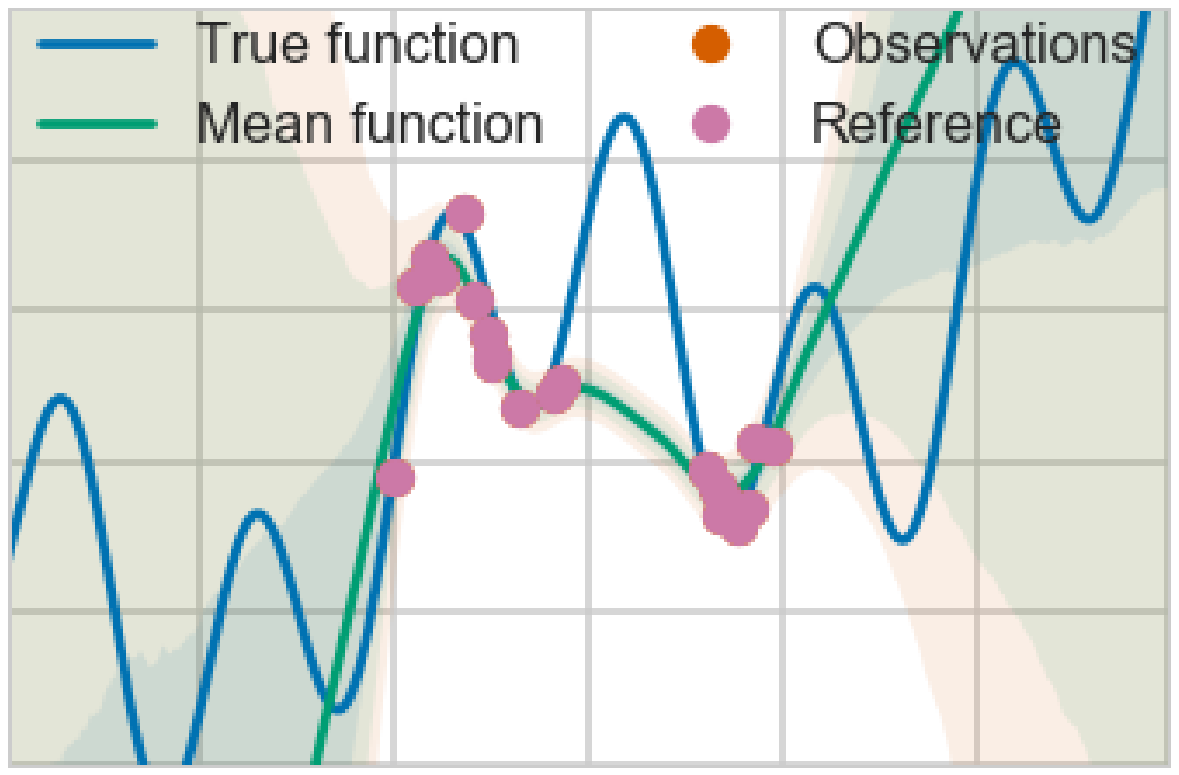}
	\end{subfigure}
	\hskip -0.04in
	\begin{subfigure}[t]{.2\textwidth}
		\includegraphics[width=\linewidth]{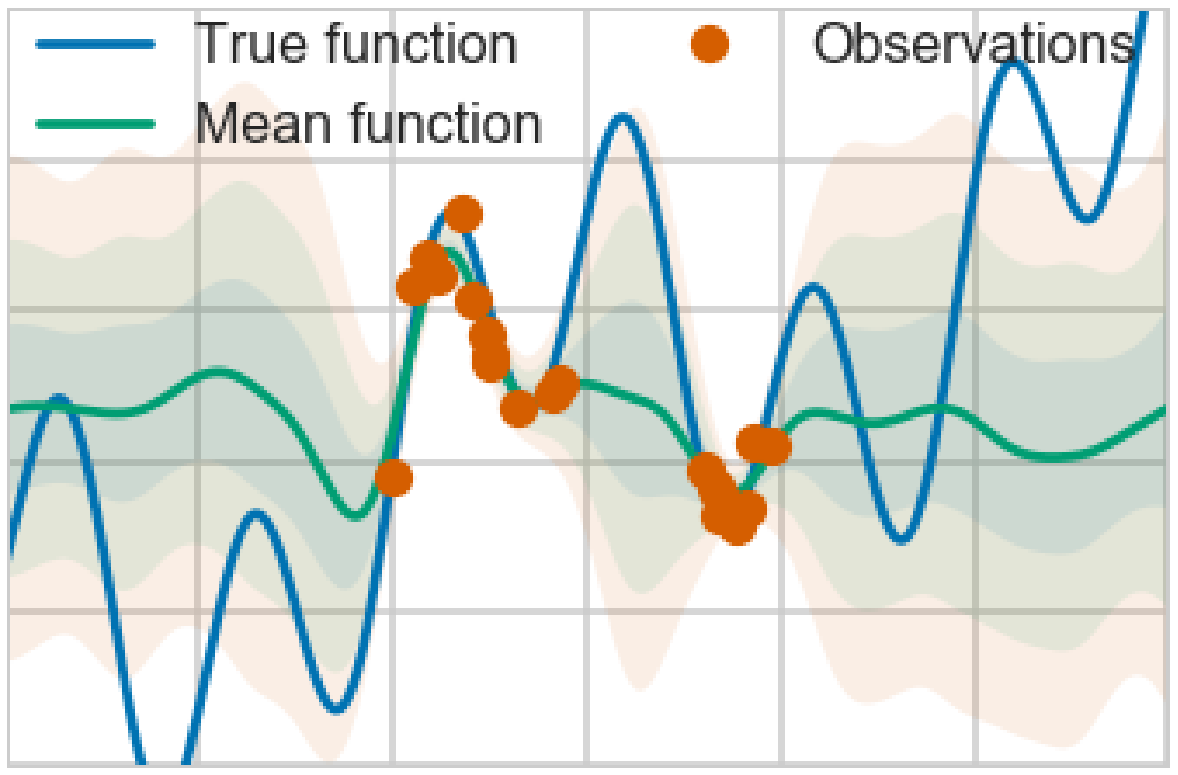}
	\end{subfigure}
	\hskip -0.04in
	\begin{subfigure}[t]{.2\textwidth}
		\includegraphics[width=\linewidth]{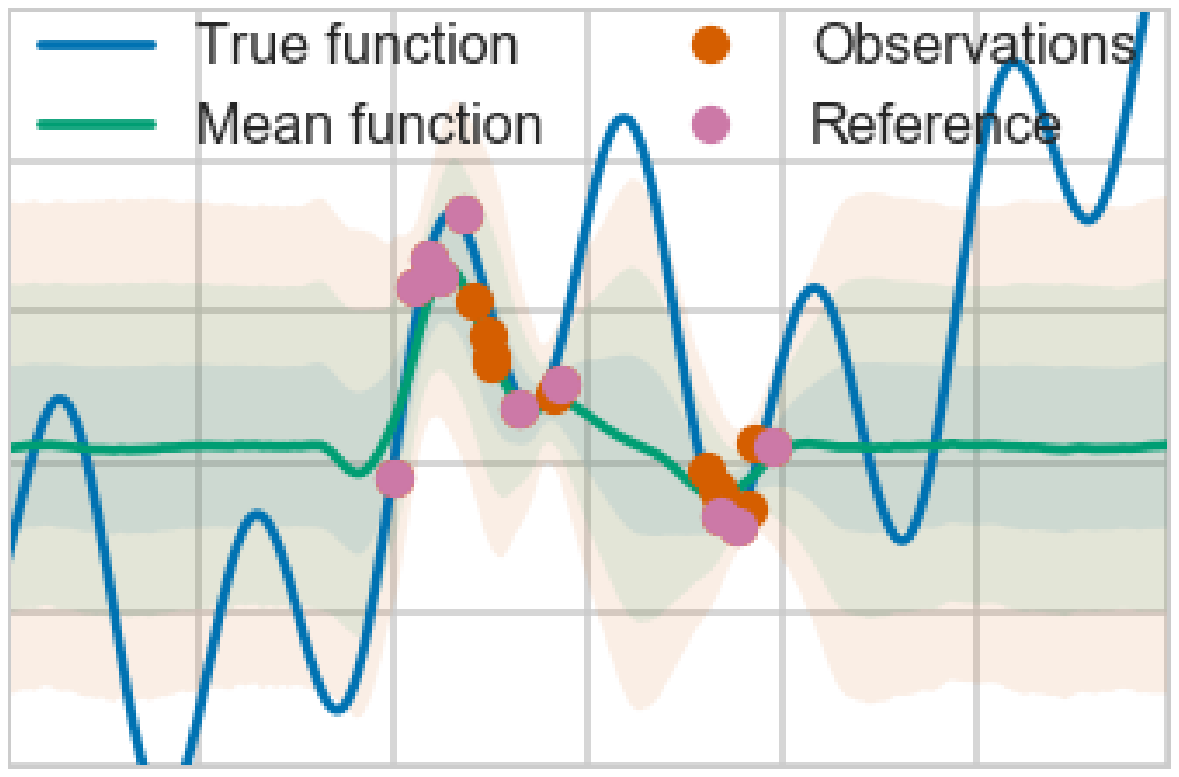}
	\end{subfigure}
	\hskip -0.03in
	\begin{subfigure}[t]{.2\textwidth}
		\includegraphics[width=\linewidth]{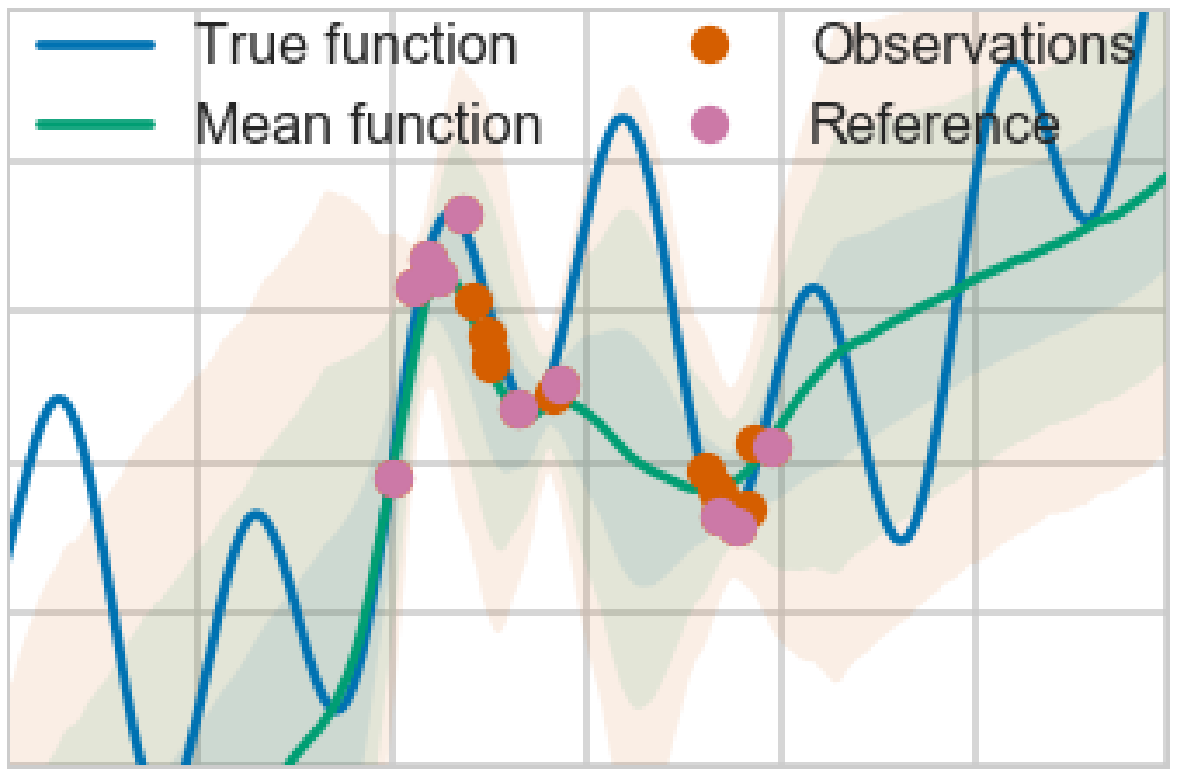}
	\end{subfigure}
	\begin{subfigure}[t]{.2\textwidth}
		\includegraphics[width=\linewidth]{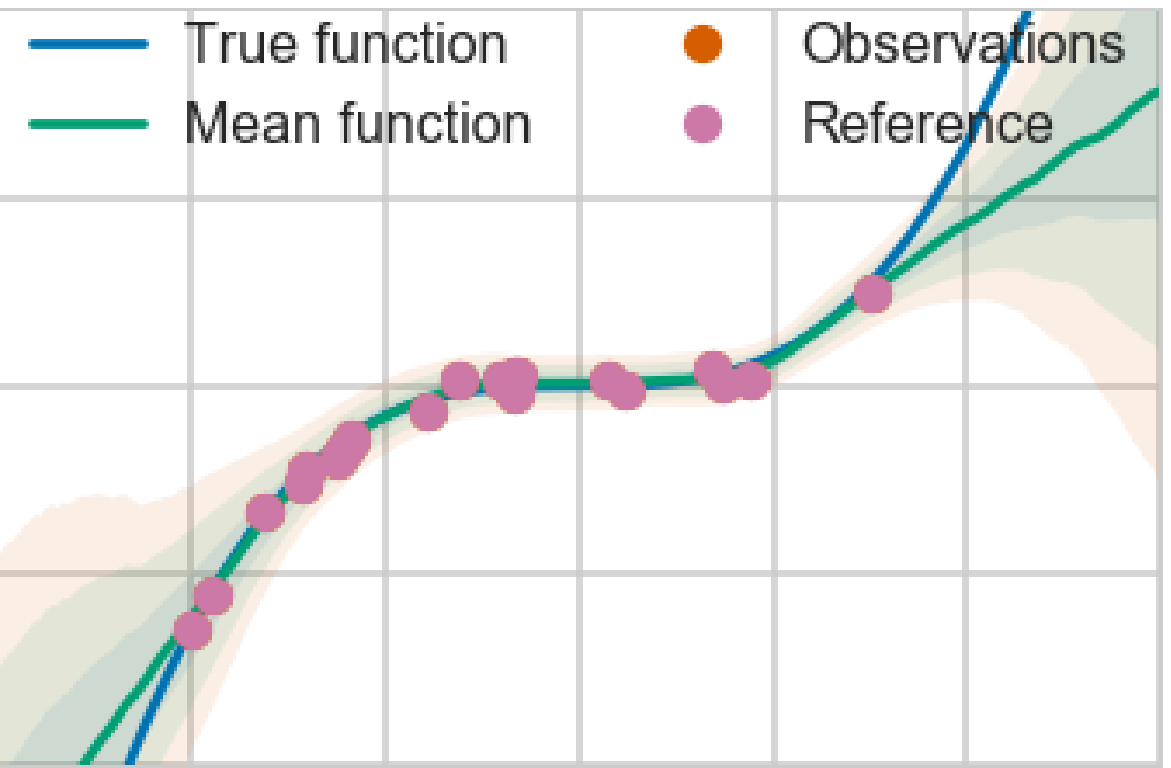}
		\caption{MC-dropout}
	\end{subfigure}%
	\begin{subfigure}[t]{.2\textwidth}
		\includegraphics[width=\linewidth]{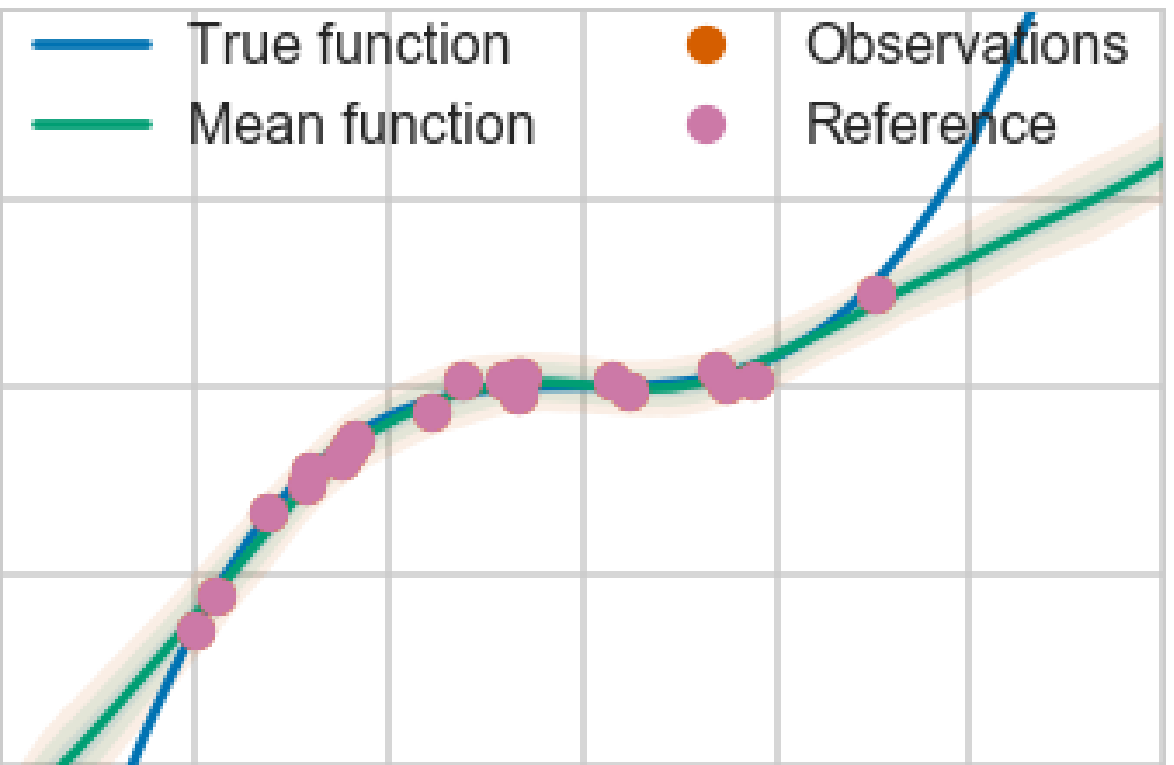}
		\caption{Neural Process}
	\end{subfigure}
	\hskip -0.04in
	\begin{subfigure}[t]{.2\textwidth}
		\includegraphics[width=\linewidth]{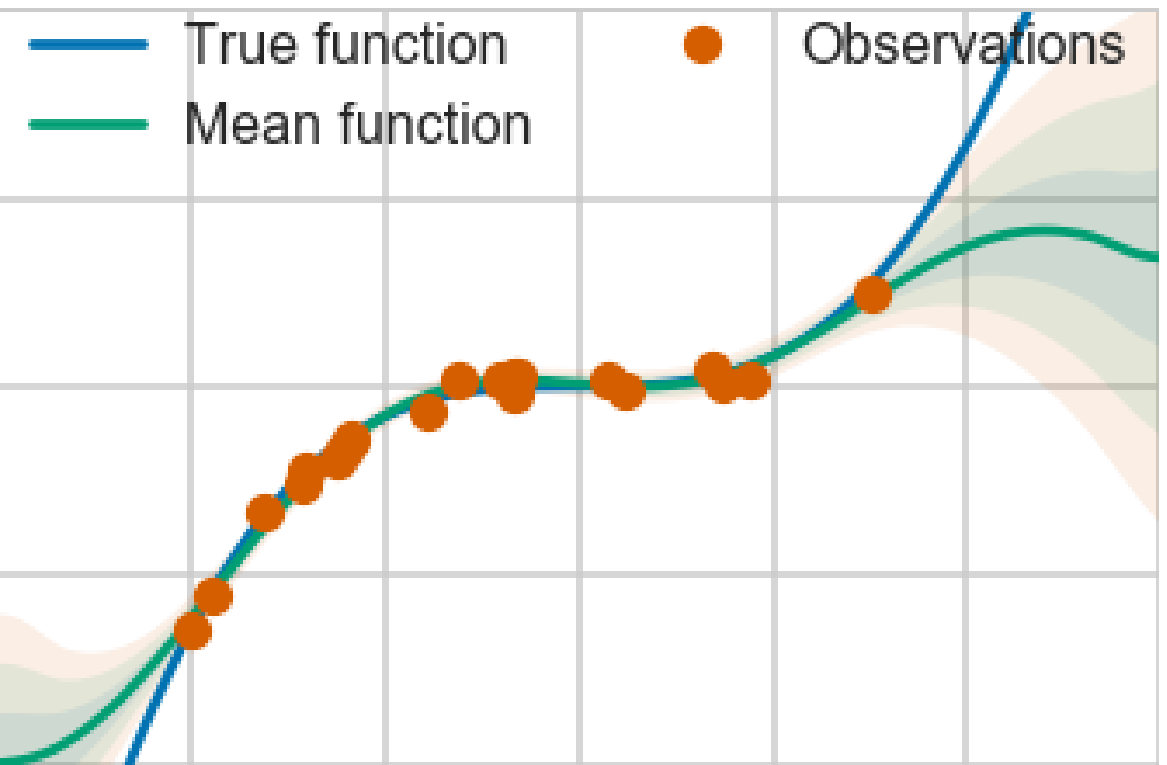}
		\caption{Gaussian Process}
	\end{subfigure}
	\hskip -0.04in
	\begin{subfigure}[t]{.2\textwidth}
		\includegraphics[width=\linewidth]{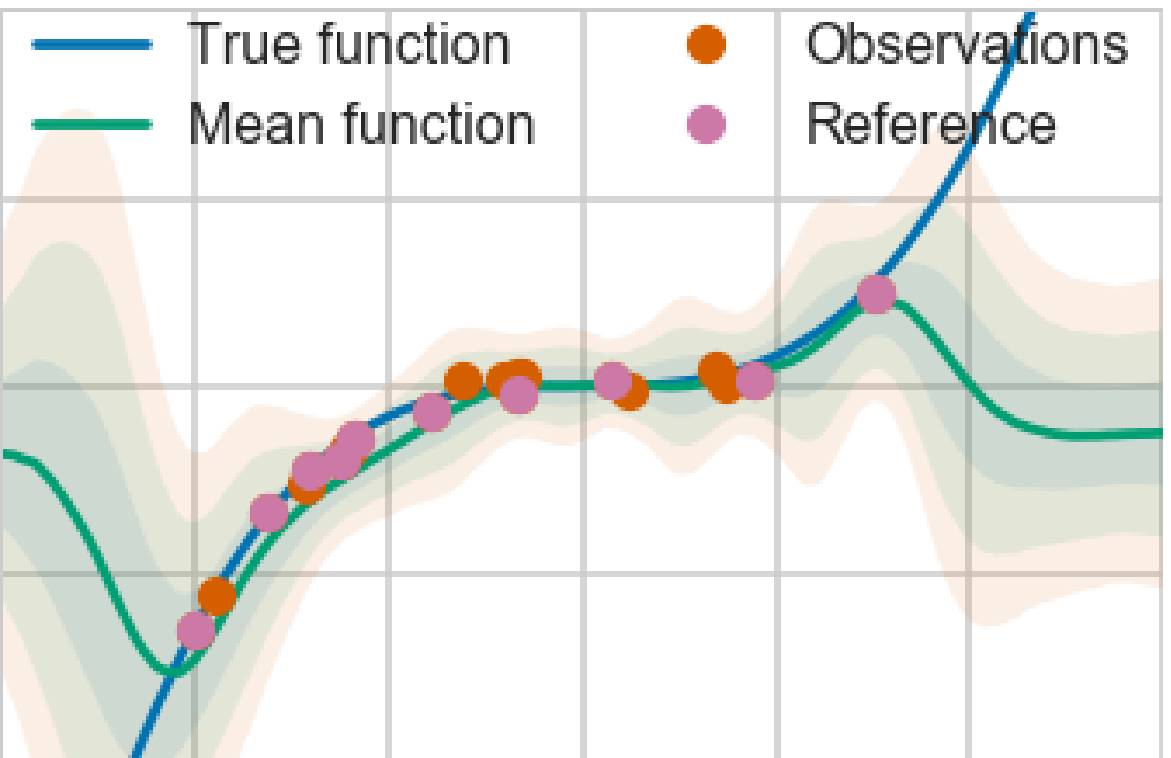}
		\caption{FNP}
	\end{subfigure}
	\hskip -0.03in
	\begin{subfigure}[t]{.2\textwidth}
		\includegraphics[width=\linewidth]{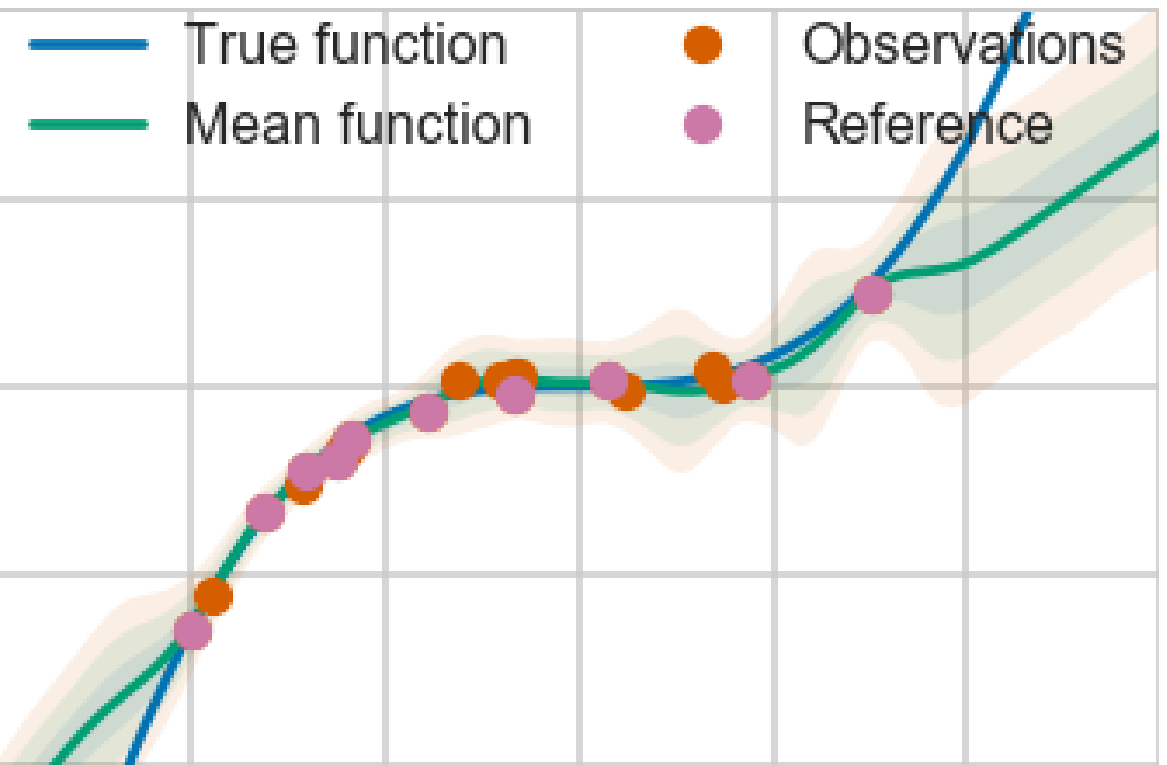}
		\caption{FNP$^+$}
	\end{subfigure}
	\caption{Predictive distributions for the two toy regression tasks according to the different models we considered. Shaded areas correspond to $\pm$ 3 standard deviations.}\label{fig:toy_regress}
\end{figure*}

The results we obtain are presented in Figure~\ref{fig:toy_regress}. We can see that for the first task the FNP with the RBF function for $g(\cdot, \cdot)$ has a behaviour that is very similar to the GP. We can also see that in the second task it has the tendency to quickly move towards a flat prediction outside the areas where we observe points, something which we argued about at Section~\ref{sec:des_ngp}. This is not the case for MC-dropout or NP where we see a more linear behaviour on the uncertainty and erroneous overconfidence, in the case of the first task, in the areas in-between the data. Nevertheless, they do seem to extrapolate better compared to the FNP and GP. The FNP$^+$ seems to combine the best of both worlds as it allows for extrapolation and GP like uncertainty, although a free bits~\cite{chen2016variational} modification of the bound for $\rvz$ was helpful in encouraging the model to rely more on these particular latent variables. Empirically, we observed that adding more capacity on $\rvu$ can move the FNP$^+$ closer to the behaviour we observe for MC-dropout and NPs. In addition, increasing the amount of model parameters $\theta$ can make FNPs overfit, a fact that can result into a reduction of predictive uncertainty.

\paragraph{Prediction performance and uncertainty quality}
For the second task we considered the image classification of MNIST and CIFAR 10. For MNIST we used a LeNet-5 architecture that had two convolutional and two fully connected layers, whereas for CIFAR we used a VGG-like architecture that had 6 convolutional and two fully connected. In both experiments we used 300 random points from $\mathcal{D}$ as $R$ for the FNPs and for NPs, in order to be comparable, we randomly selected up to 300 points from the current batch for the context points during training and used the same 300 points as FNPs for evaluation. The dimensionality of $\rvu, \rvz$ was $32, 64$ for the FNP models in both datasets, whereas for the NP the dimensionality of the global variable was $32$ for MNIST and $64$ for CIFAR. 

As a proxy for the uncertainty quality we used the task of out of distribution (o.o.d.) detection; given the fact that FNPs are Bayesian models we would expect that their epistemic uncertainty will increase in areas where we have no data (i.e. o.o.d. datasets). The metric that we report is the average entropy on those datasets as well as the area under an ROC curve (AUCR) that determines whether a point is in or out of distribution according to the predictive entropy. Notice that it is simple to increase the first metric by just learning a trivial model but that would be detrimental for AUCR; in order to have good AUCR the model must have low entropy on the in-distribution test set but high entropy on the o.o.d. datasets. For the MNIST model we considered notMNIST, Fashion MNIST, Omniglot, Gaussian $\mathcal{N}(0, 1)$ and uniform $U[0, 1]$ noise as o.o.d. datasets whereas for CIFAR 10 we considered SVHN, a tinyImagenet resized to $32$ pixels, iSUN and similarly Gaussian and uniform noise. The summary of the results can be seen at Table~\ref{tab:mnist_results}.

\begin{table}[hbt!]
	\centering 
	\setlength{\tabcolsep}{3pt}
	\caption{Accuracy and uncertainty on MNIST and CIFAR 10 from 100 posterior predictive samples. For the all of the datasets the first column is the average predictive entropy whereas for the o.o.d. datasets the second is the AUCR and for the in-distribution it is the test error in \%.}
	\label{tab:mnist_results}
	\begin{adjustbox}{max width=\linewidth}
		\begin{tabular}{lcccccc}
			\toprule 
			 & \multicolumn{1}{c}{NN} & \multicolumn{1}{c}{MC-Dropout} & \multicolumn{1}{c}{VI BNN} & 
			\multicolumn{1}{c}{NP} & 	\multicolumn{1}{c}{FNP} & 	 \multicolumn{1}{c}{FNP$^+$} \\
			\midrule
			MNIST & 0.01 / 0.6 & \textbf{0.05 / 0.5} & 0.02 / 0.6 & 0.01 / 0.6 & 0.04 / 0.7 & 0.02 / 0.7 \\
			\midrule
			nMNIST & 1.03 / 99.73 & 1.30 / 99.48 & 1.33 / 99.80 & 1.31 / 99.90 & 1.94 / 99.90 & \textbf{1.77 / 99.96}\\
			fMNIST & 0.81 / 99.16 & 1.23 / 99.07 & 0.92 / 98.61 & 0.71 / 98.98 & \textbf{1.85 / 99.66} & 1.55 / 99.58 \\
			Omniglot & 0.71 / 99.44 & 1.18 / 99.29 & 1.61 / 99.91 & 0.86 / 99.69 & 1.87 / 99.79 & \textbf{1.71 / 99.92} \\
			Gaussian & 0.99 / 99.63 & \textbf{2.03 / 100.0} & \textbf{1.77 / 100.0} & 1.58 / 99.94 & 1.94 / 99.86 & \textbf{2.03 / 100.0}\\
			Uniform & 0.85 / 99.65 & 0.65 / 97.58 & 1.41 / 99.87 & 1.46 / 99.96 & 2.11 / 99.98 & \textbf{1.88 / 99.99}\\
			\midrule 
			Average & 0.9$\pm$0.1 / 99.5$\pm$0.1 & 1.3$\pm$0.2 / 99.1$\pm$0.4 & 1.4$\pm$0.1 / 99.6$\pm$0.3 &  1.2$\pm$0.2 / 99.7$\pm$0.2 & 1.9$\pm$0.1 / 99.8$\pm$0.1 & \textbf{1.8$\pm$0.1 / 99.9$\pm$0.1}\\
			\midrule
			CIFAR10 & 0.05 / 6.9 & 0.06 / 7.0 & \textbf{0.06 /  6.4} & 0.06 / 7.5 & 0.18 / 7.2 & 0.08 / 7.2 \\
			\midrule
			SVHN & 0.44 / 93.1 & 0.42 / 91.3 & 0.45 / 91.8 & 0.38 / 90.2 & \textbf{1.09 / 94.3} & 0.42 / 89.8 \\
			tImag32 & 0.51 / 92.7 & 0.59 / 93.1 & 0.52 / 91.9 & 0.45 / 89.8 & \textbf{1.20 / 94.0} & 0.74 / 93.8 \\
			iSUN & 0.52 / 93.2 & 0.59 / 93.1 & 0.57 / 93.2 & 0.47 / 90.8 & \textbf{1.30 / 95.1} & 0.81 / 94.8 \\
			Gaussian & 0.01 / 72.3 & 0.05 / 72.1 & 0.76 / 96.9 & 0.37 / 91.9 & 1.13 / 95.4 & \textbf{0.96 / 97.9}\\
			Uniform & \textbf{0.93 / 98.4} & 0.08 / 77.3 & 0.65 / 96.1 & 0.17 / 87.8 & 0.71 / 89.7 & \textbf{0.99 / 98.4} \\
			\midrule
			Average & 0.5$\pm$0.2 / 89.9$\pm$4.5 & 0.4$\pm$0.1 / 85.4$\pm$4.5 & 0.6$\pm$0.1 / 94$\pm$1.1 &  0.4$\pm$0.1 / 90.1$\pm$0.7 & 1.1$\pm$0.1 / 93.7$\pm$1.0 & \textbf{0.8$\pm$0.1 / 94.9$\pm$1.6}\\
			\bottomrule
		\end{tabular}
	\end{adjustbox}
\end{table}

We observe that both FNPs have comparable accuracy to the baseline models while having higher average entropies and AUCR on the o.o.d. datasets. FNP$^+$ in general seems to perform better than FNP. The FNP did have a relatively high in-distribution entropy for CIFAR 10, perhaps denoting that a larger $R$ might be more appropriate. We further see that the FNPs have almost always better AUCR than all of the baselines we considered. Interestingly, out of all the non-noise o.o.d. datasets we did observe that Fashion MNIST and SVHN, were the hardest to distinguish on average across all the models. This effect seems to agree with the observations from~\cite{nalisnick2018deep}, although more investigation is required. We also observed that, sometimes, the noise datasets on all of the baselines can act as ``adversarial examples''~\cite{szegedy2013intriguing} thus leading to lower entropy than the in-distribution test set (e.g. Gaussian noise for the NN on CIFAR 10). FNPs did have a similar effect on CIFAR 10, e.g. the FNP on uniform noise, although to a much lesser extent. We leave the exploration of this phenomenon for future work. It should be mentioned that other advances in o.o.d. detection, e.g.~\cite{liang2017enhancing,choi2018generative}, are orthogonal to FNPs and could further improve performance.

\begin{wraptable}{r}{5.5cm} 
	\centering 
	\setlength{\tabcolsep}{5pt}
	\caption{Results obtained by training a NP model with a fixed reference set (akin to FNP) and a FNP$^+$ model with a random reference set (akin to NP).}
	\label{tab:mnist_results_extra}
	\begin{adjustbox}{max width=\linewidth, max height=2.4cm}
		\begin{tabular}{lcc}
			\toprule 
			  & \multicolumn{1}{c}{NP fixed $R$} & \multicolumn{1}{c}{FNP$^+$ random $R$} \\
			\midrule
			MNIST & 0.01 / 0.6 & 0.02 / 0.8 \\
			\midrule
			nMNIST & 1.09 / 99.78 & 2.20 / 100.0 \\
			fMNIST & 0.64 / 98.34 & 1.58 / 99.78 \\
			Omniglot & 0.79 / 99.53 & 2.06 / 99.99 \\
			Gaussian & 1.79 / 99.96 & 2.28 / 100.0 \\
			Uniform & 1.42 / 99.93 & 2.23 / 100.0 \\
			\midrule
			CIFAR10 & 0.07 / 7.5 & 0.09 / 6.9 \\
			\midrule
			SVHN & 0.46 / 91.5 & 0.56 / 91.4 \\
			tImag32 & 0.55 / 91.5 & 0.77 / 93.4 \\
			iSUN & 0.60 / 92.6 & 0.83 / 94.0 \\
			Gaussian & 0.20 / 87.2 & 1.23 / 99.1 \\
			Uniform & 0.53 / 94.3 & 0.90 / 97.2 \\
			\bottomrule
		\end{tabular}
	\end{adjustbox}
\end{wraptable}

We further performed additional experiments in order to better disentangle the performance differences between NPs and FNPs: we trained an NP with the same fixed reference set $R$ as the FNPs throughout training, as well as an FNP$^+$ where we randomly sample a new $R$ for every batch (akin to the NP) and use the same $R$ as the NP for evaluation. While we argued in the construction of the FNPs that with a fixed $R$ we can obtain a stochastic process, we could view the case with random $R$ as an ensemble of stochastic processes, one for each realization of $R$. The results from these models can be seen at Table~\ref{tab:mnist_results_extra}. On the one hand, the FNP$^+$ still provides robust uncertainty while the randomness in $R$ seems to improve the o.o.d. detection, possibly due to the implicit regularization. On the other hand the fixed $R$ seems to hurt the NP, as the o.o.d. detection decreased, similarly hinting that the random $R$ has beneficial regularizing effects. 

Finally, we provide some additional insights after doing ablation studies on MNIST w.r.t. the sensitivity to the number of points in $R$ for NP, FNP and FNP$^+$, as well as varying the amount of dimensions for $\rvu,\rvz$ in the FNP$^+$. The results can be found in the Appendix. We generally observed that NP models have lower average entropy at the o.o.d. datasets than both FNP and FNP$^+$ irrespective of the size of $R$. The choice of $R$ seems to be more important for the FNPs rather than NPs, with FNP needing a larger $R$, compared to FNP$^+$, to fit the data well. In general, it seems that it is not the quantity of points that matters but rather the quality; the performance did not always increase with more points. This supports the idea of a ``coreset'' of points, thus exploring ideas to infer it is a promising research direction that could improve scalability and alleviate the dependence of FNPs on a reasonable $R$. As for the trade-off between $\rvz, \rvu$ in FNP$^+$; a larger capacity for $\rvz$, compared to $\rvu$, leads to better uncertainty whereas the other way around seems to improve accuracy. These observations are conditioned on having a reasonably large $\rvu$, which facilitates for meaningful $\rmG, \rmA$.

	\section{Discussion} 
	We presented a novel family of exchangeable stochastic processes, the Functional Neural Processes (FNPs). In contrast to NPs~\cite{garnelo2018neural} that employ global latent variables, FNPs operate by employing local latent variables along with a dependency structure among them, a fact that allows for easier encoding of inductive biases. We verified the potential of FNPs experimentally, and showed that they can serve as competitive alternatives. We believe that FNPs open the door to plenty of exciting avenues for future research; designing better function priors by e.g. imposing a manifold structure on the FNP latents~\cite{falorsi2019reparameterizing}, extending FNPs to unsupervised learning by e.g. adapting ACNs~\cite{graves2018associative} or considering hierarchical models similar to deep GPs~\cite{damianou2012deep}.

 	\subsubsection*{Acknowledgments}
 	We would like to thank Patrick Forr\'{e} for helpful discussions over the course of this project and Peter Orbanz, Benjamin Bloem-Reddy for helpful discussions during a preliminary version of this work. We would also like to thank Daniel Worrall, Tim Bakker and Stephan Alaniz for helpful feedback on an initial draft. 
	
	\bibliographystyle{plain}
	\bibliography{bibliography}
	\fi
	\ifappendix
    \ifcontent
    \fi
    \section*{Appendix}
	\appendix
	\section{Experimental details}
Throughout the experiments, the architectures for the FNP and FNP$^+$ were constructed as follows. We used a neural network torso in order to obtain an intermediate hidden representation $\rvh$ of the inputs $\rvx$ and then parametrized two linear output layers, one that lead to the parameters of $p(\rvu|\rvx)$ and one that lead to the parameters of $q(\rvz|\rvx)$, both of which were fully factorized Gaussians. The function $g(\cdot, \cdot)$ for the Bernoulli probabilities was set to an RBF, i.e. $g(\rvu_i, \rvu_j) = \exp(- .5 \tau \|\rvu_i - \rvu_j\|^2)$, where $\tau$ was optimized to maximize the lower bound. The temperature of the binary concrete / Gumbel-softmax relaxation was kept at $0.3$ throughout training and we used the log CDF of a standard normal as the $\tau_k(\cdot)$ for $\rmG$. For the classifiers $p(y|\rvz), p(y|\rvz, \rvu)$ we used a linear model that operated on top of $\text{ReLU}(\rvz)$ or $\text{ReLU}([\rvz,\rvu])$ respectively. We used a single Monte Carlo sample for each batch during training in order to estimate the bound of FNPs. We similarly used a single sample for the NP, MC-dropout and the variationally trained Bayesian neural network (VI BNN). All of the models were implemented in PyTorch and were run across six Titan X (Pascal) GPUs (one GPU per model).

The NN, MC-dropout and VI BNN had the same torso and classifier as the FNPs. As the NP has not been previously employed in the settings we considered, we designed the architecture in a way that is similar to the FNP. More specifically, we used the same neural network torso to provide an intermediate representation $\rvh$ for the inputs $\rvx$. To obtain the global embedding $\rvr$ we concatenated the labels $\rvy$ to obtain $\tilde{\rvh} = [\rvh, \rvy]$, projected $\tilde{\rvh}$ to $256$ dimensions with a linear layer and then computed the average of each dimension across the context. The parameters of the distribution over the global latent variables $\boldsymbol{\theta}$ were then given by a linear layer acting on top of $\text{ReLU}(\rvr)$. After sampling $\boldsymbol{\theta}$ we then used a linear classifier that operated on top of $[\rvh, \text{ReLU}(\boldsymbol{\theta})]$. 

In the regression experiment for the initial transformation of $x$ we used 100 ReLUs for both NP and FNP models via a single layer MLP, whereas for the regressor we used a linear layer for NP (more capacity lead to overfitting and a decrease in predictive uncertainty) and a single hidden layer MLP of 100 ReLUs for the FNPs. For the MC-dropout network used a single hidden layer MLP of 100 units and we applied dropout with a rate of 0.5 at the hidden layer. In all of the neural networks models, the heteroscedastic noise was parametrized according to $\sigma = .1 + 0.9\log(1 + \exp(d))$, where $d$ was a neural network output. For the GP, we optimized the kernel lengthscale according to the marginal likelihood. We also found it beneficial to apply a soft-free bits~\cite{chen2016variational} modification of the bound to help with the optimization of $\rvz$, where we allowed $1$ free bit on average across all dimensions and batch elements for the FNP and $4$ for the FNP$^+$.

For the MNIST experiment, the model architecture was a 20C5 - MP2 - 50C5 - MP2 - 500FC - Softmax, where 20C5 corresponds to a convolutional layer of 20 output feature maps with a kernel size of 5, MP2 corresponds to max pooling with a size of 2, 500FC corresponds to fully connected layer of 500 output units and Softmax corresponds to the output layer. The initial representation of $\rvx$ for the NP and FNPs was provided by the penultimate layer of the network. For the MC-dropout network we applied 0.5 dropout to every layer, whereas for the VI BNN we performed variational inference over all of the parameters. The number of points in $R$ was set to $300$, a value that was determined from a range of $[50, 100, 200, 300, 500]$ by judging the performance of the NP and FNP models on the MNIST / notMNIST pair. For the FNP we used minibatches of 100 points from $M$, while we always appended the full $R$ to each of those batches. For the NP, since we were using a random set of contexts every time, we used a batch size of 400 points, where, in order to be comparable to the FNP, we randomly selected up to 300 points from the current batch for the context points during training and used the same 300 points as FNP for evaluation. We set the upper bound of training epochs for the FNPs, NN, MC-dropout and VI BNN networks to 100 epochs, and 200 epochs for the NP as it did less parameter updates per epoch than the others. Optimization was done with Adam~\cite{kingma2014adam} using the default hyperparameters. We further did early stopping according to the accuracy on the validation set and no other regularization was employed. Finally, we also employed a soft-free bits~\cite{chen2016variational} modification of the bound to help with the optimization; for the $\rvz$ of FNPs we allowed $1$ free bit on average across all dimensions and batch elements throughout training whereas for the VI BNN we allowed $1$ free bit on average across all of the parameters.

The architecture for the CIFAR 10 experiment was a 2x(128C3) - MP2 - 2x(256C3) - MP2 - 2x(512C3) - MP2 - 1024FC - Softmax along with batch normalization~\cite{ioffe2015batch} employed after every layer (besides the output one). Similarly to the MNIST experiment, the initial representation of $\rvx$ for the NP and FNPs was provided by the penultimate layer of each of the networks. We didn't optimize any hyperparameters for these experiments and used the same number of reference points, free bits, amount of epochs, regularization and early stopping criteria we used at MNIST. For the MC-dropout network we applied dropout with a rate of 0.2 at the beginning of each stack of convolutional layers that shared the same output channels and with a rate of 0.5 before every fully connected layer. For the VI BNN, we performed variational inference over the layers were dropout was originally employed in the MC-dropout network, and we performed maximum likelihood for the rest. Optimization was done with Adam with an initial learning rate of $0.001$ that was decayed by a factor of $10$ every thirty epochs for the NN, MC-Dropout, VI BNN and FNPs and every $60$ epochs for the NP. We also performed data augmentation during training by doing random cropping with a padding of 4 pixels and random horizontal flips for both the reference and other points. We did not do any data augmentation during test time. The images were further normalized by subtracting the mean and by dividing with the standard deviation of each channel, computed across the training dataset.

\section{Ablation study on MNIST}
In this section we provide the additional results we obtained on MNIST during the ablation study. The discussion of the results can be found in the main text. We measured the sensitivity of NPs and FNPs to the size of the reference set $R$, the trade-offs we obtain by varying the dimensionalities of $\rvu, \rvz$ for the FNP$^+$, and finally the deviation of the scores for the FNPs after 5 replications, where on each one we select a different subset of size 300 (the size we used for the experiments in the main paper) of the training set as $R$. The results can be seen at Table~\ref{tab:ab_r},  Table~\ref{tab:ab_zu} and Table~\ref{tab:5_repl}.

\begin{table}[!h]
    \caption{Test error and uncertainty quality as a function of the size of the reference set $R$. For the o.o.d. entropy and AUCR we report the mean and standard error across all of the o.o.d. datasets.}\label{tab:ab_r}
    \begin{subtable}{.24\linewidth}
      \centering
        \caption{Error \%}
        \begin{adjustbox}{max width=\linewidth}
        \begin{tabular}{lccc}
        \toprule
        \# $R$ & \multicolumn{1}{c}{NP} & \multicolumn{1}{c}{FNP} &
    		\multicolumn{1}{c}{FNP$^+$} \\
    		\midrule
            50 & 0.6 & 30.6 & 0.7 \\
			100 & 0.6 & 0.9 & 0.9 \\
			200 & 0.4 & 0.8 & 0.7 \\
			500 & 0.5 & 0.9 & 0.7 \\
			\bottomrule
        \end{tabular}
        \end{adjustbox}
    \end{subtable}\hfill%
    \begin{subtable}{.35\linewidth}
      \centering
        \caption{o.o.d. entropy}
        \begin{adjustbox}{max width=\linewidth}
        \begin{tabular}{lccc}
            \toprule
        \# $R$ & \multicolumn{1}{c}{NP} & \multicolumn{1}{c}{FNP} &
    		\multicolumn{1}{c}{FNP$^+$} \\
    		\midrule
            50 & 1.0$\pm$0.2 & 2.1$\pm$0.0 & 1.6$\pm$0.1 \\
			100 & 1.4$\pm$0.3 & 1.8$\pm$0.1 & 2.0$\pm$0.1 \\
			200 & 0.9$\pm$0.2 & 1.8$\pm$0.1 & 1.6$\pm$0.1 \\
			500 & 0.8$\pm$0.1 & 1.7$\pm$0.1 & 1.3$\pm$0.1 \\
			\bottomrule
        \end{tabular}
        \end{adjustbox}
    \end{subtable}\hfill%
    \begin{subtable}{.39\linewidth}
      \centering
        \caption{AUCR}
        \begin{adjustbox}{max width=\linewidth}
        \begin{tabular}{lccc}
            \toprule
        \# $R$ & \multicolumn{1}{c}{NP} & \multicolumn{1}{c}{FNP} &
    		\multicolumn{1}{c}{FNP$^+$} \\
    		\midrule
            50 & 99.4$\pm$0.3 & 80.0$\pm$0.3 & 99.7$\pm$0.2 \\
			100 & 99.7$\pm$0.2 & 99.6$\pm$0.1 & 99.9$\pm$0.1 \\
			200 & 99.5$\pm$0.2 & 99.8$\pm$0.1 & 99.8$\pm$0.1 \\
			500 & 99.5$\pm$0.2 & 99.8$\pm$0.1 & 99.4$\pm$0.3 \\
			\bottomrule
        \end{tabular}
        \end{adjustbox}
    \end{subtable}
\end{table}

\begin{table}[!h]
\centering
\centering
\caption{Test error and uncertainty quality as a function of the size of $\rvu,\rvz$ for the FNP$^+$. We used the same $R$ as the one for the experiments in the main text.}\label{tab:ab_zu}
\begin{tabular}{lccc}
\toprule
$\rvu,\rvz$ & Error \% & o.o.d. entropy & AUCR \\
\midrule
$32, 64$ & 0.7 & 1.8$\pm$0.1 & 99.9$\pm$0.1\\
$64,32$ & 0.7 & 1.2$\pm$0.3& 99.3$\pm$0.6\\
$16,80$ & 0.7 & 2.0$\pm$0.1& 99.6$\pm$0.4\\
$8,88$ & 0.8 & 1.1$\pm$0.0& 99.4$\pm$0.1\\
$2, 94$ & 4.7 & 0.4$\pm$0.0 & 91.4$\pm$2.2\\
\bottomrule
\end{tabular}
\end{table}

\begin{table}[H]
\centering
\caption{Average and standard error for the FNP models after 5 replications with different reference sets $R$ of size 300.}\label{tab:5_repl}
\begin{tabular}{lcc}
\toprule 
	 & \multicolumn{1}{c}{FNP} & \multicolumn{1}{c}{FNP$^+$}\\
	\midrule
	MNIST & 0.02$\pm$ 0.0 / 0.7$\pm$0.0& 0.02$\pm$0.0 / 0.7$\pm$0.0 \\
	\midrule
	nMNIST & 1.95$\pm$0.06 / 99.93$\pm$0.03 &  1.97$\pm$0.05 / 99.97$\pm$ 0.02\\
	fMNIST & 1.69$\pm$0.05 / 99.43$\pm$0.10 & 1.63$\pm$0.04 / 99.58$\pm$0.07\\
	Omniglot & 1.88$\pm$0.04 / 99.86$\pm$0.04 & 1.85$\pm$0.06 / 99.90$\pm$0.03 \\
	Gaussian & 1.95$\pm$0.14 / 99.81$\pm$0.16 & 2.07$\pm$0.02 / 99.98$\pm$0.02 \\
	Uniform & 1.99$\pm$0.06 / 99.96$\pm$0.02 & 1.95$\pm$0.06 / 99.96$\pm$0.02\\
	\midrule
	CIFAR10 & 0.17$\pm$0.01 / 7.5$\pm$0.08 & 0.08$\pm$0.01 / 7.3$\pm$0.04\\
	\midrule
	SVHN & 0.86$\pm$0.05 / 90.74$\pm$0.81 & 0.51$\pm$0.04 / 91.3$\pm$0.76 \\
	tImag32 & 1.22$\pm$0.02 / 94.49$\pm$0.29 & 0.69$\pm$0.02 / 92.6$\pm$0.39 \\
	iSUN & 1.33$\pm$0.02 / 95.71$\pm$0.24 & 0.75$\pm$0.02 / 93.8$\pm$0.38 \\
	Gaussian & 1.05$\pm$0.10 / 93.73$\pm$1.29 & 0.60$\pm$0.08 / 93.6$\pm$1.09\\
	Uniform & 0.85$\pm$0.16 / 89.43$\pm$4.20 & 0.61$\pm$0.11 / 93.4$\pm$1.89 \\
	\bottomrule
\end{tabular}
\end{table}

\section{The Functional Neural Process is an exchangeable stochastic process}\label{sec:proof}
\begin{proposition*}
	The distributions defined in Eq.8, 9 define valid permutation invariant stochastic processes, hence they correspond to Bayesian models.
\end{proposition*}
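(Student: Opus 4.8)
The plan is to verify the two hypotheses of the Kolmogorov extension theorem for the finite-dimensional distributions $\mathcal{FNP}_\theta$ and $\mathcal{FNP}^+_\theta$ of Eq.~\ref{eq:proc_ngpz},~\ref{eq:proc_ngpzu} --- permutation invariance (exchangeability) and marginalization consistency --- and then to read off the Bayesian-model statement from de Finetti's theorem. Throughout, $R$ is a fixed set of reference points, and the models already integrate over the labels $y_{i\in R\setminus\mathcal{D}_x}$ of reference points that have not (yet) been observed; hence the only points whose permutation or addition must be handled are those in $M=\mathcal{D}_x\setminus R$. I will spell out the argument for $\mathcal{FNP}_\theta$, since the $\mathcal{FNP}^+_\theta$ case differs only by carrying the extra conditioning on $\rmU_B$ through each factor.

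\emph{Permutation invariance.} For a permutation $\sigma$ of $\mathcal{D}$, I would argue that each building block of Eq.~\ref{eq:proc_ngpz} is permutation equivariant and hence, after the sums and integrals, the marginal over $\rvy_\mathcal{D}$ is invariant: the embedding term $p_\theta(\rmU_B|\rmX_B)=\prod_{i\in B}p_\theta(\rvu_i|\rvx_i)$ is a product over datapoints; the graph term $p(\rmG|\rmU_R)p(\rmA|\rmU_R,\rmU_M)$ is a product of Bernoulli factors indexed by ordered pairs, so relabeling the points simply relabels the rows and columns of $\rmG,\rmA$; and the predictive term in Eq.~\ref{eq:ref_pyz} is a product over $i\in R$ and $j\in M$ whose parent-aggregation in Eq.~\ref{eq:pz} is a \emph{sum} over $R$, hence insensitive to the internal order of the parents. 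Summing over all $\rmG,\rmA$ and integrating out $\rmU_B,\rmZ_B$ and the unobserved reference labels commutes with relabeling, which yields the defining exchangeability identity.

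\emph{Consistency and conclusion.} Following~\cite{datta2016hierarchical}, it suffices (by induction, using the permutation invariance above) to add a single point $(\rvx_{N+1},y_{N+1})$ and show $\int \mathcal{FNP}_\theta(\mathcal{D}\cup\{(\rvx_{N+1},y_{N+1})\})\,\mathrm{d}y_{N+1}=\mathcal{FNP}_\theta(\mathcal{D})$, splitting into two cases. If $\rvx_{N+1}\in R$, then $y_{N+1}$ was already one of the labels $y_{i\in R\setminus\mathcal{D}_x}$ being integrated out in $\mathcal{FNP}_\theta(\mathcal{D})$; enlarging $\mathcal{D}$ only relabels it as observed, and integrating it back reproduces the original expression verbatim, since no other factor depends on whether a reference point is observed. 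If $\rvx_{N+1}\notin R$, it joins $M$, and the key structural fact is that points outside $R$ are never parents --- $\rmG$ lives on $R$ and the edges of $\rmA$ go from $R$ into $M$ --- so the new point is a leaf of the combined DAG. Concretely it contributes one new factor of each type (an embedding factor $p_\theta(\rvu_{N+1}|\rvx_{N+1})$, the Bernoulli factors $\prod_{j\in R}\mathrm{Bern}(\rmA_{N+1,j}|g(\rvu_{N+1},\rvu_j))$, and the predictive factor $\int p_\theta(\rvz_{N+1}|\mathrm{par}_{\rmA_{N+1}}(R,\rvy_R))p_\theta(y_{N+1}|\rvz_{N+1})\,\mathrm{d}\rvz_{N+1}$), and these appear in no other term; integrating out $y_{N+1}$ collapses the predictive factor to $1$, after which summing the new row of $\rmA$ and integrating $\rvu_{N+1}$ collapse to $1$ as well, leaving exactly $\mathcal{FNP}_\theta(\mathcal{D})$. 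With exchangeability and consistency in hand, the Kolmogorov extension theorem produces a stochastic process whose finite-dimensional marginals are these distributions, and de Finetti's theorem supplies a prior $p_\theta(\rvw)$ with $p_\theta(y_{1:N}|\rvx_{1:N})=\int p_\theta(\rvw)\prod_{i=1}^N p(y_i|\rvx_i,\rvw)\,\mathrm{d}\rvw$. I expect the only delicate point to be the second consistency case: one must make explicit that a point outside $R$ acquires parents but no children and does not perturb the stochastic ordering used to build $\rmG$ on $R$, so that the new leaf marginalizes out cleanly; the permutation-invariance argument and the two theorem invocations are then routine.
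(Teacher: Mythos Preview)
Your proposal is correct and follows essentially the same route as the paper's proof: both establish permutation invariance by arguing that each factor (embeddings, graph Bernoullis, parent-aggregated predictive terms) is permutation equivariant so that the products and integrals are invariant, and both establish consistency via the two-case split of~\cite{datta2016hierarchical} (new point in $R$ is already being marginalized; new point outside $R$ is a leaf whose factors integrate to $1$ sequentially), then conclude via the Kolmogorov extension and de Finetti theorems. One small framing remark: your sentence that ``the only points whose permutation or addition must be handled are those in $M$'' is slightly loose, since a permutation of $\mathcal{D}$ can also reorder points in $R\cap\mathcal{D}_x$; but your actual equivariance argument covers this case anyway, so the proof is unaffected.
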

\begin{proof}
	In order to prove the proposition we will rely on de Finetti's and Kolmogorov Extension Theorems~\cite{klenke2013probability} and show that $p(\rvy_\mathcal{D}|\rmX_\mathcal{D})$ is permutation invariant and its marginal distributions are consistent under marginalization. We will focus on FNP as the proof for FNP$^+$ is analogous. As a reminder, we previously defined $R$ to be a set of reference inputs $R=\{\rvx_1^r, \dots, \rvx_K^r\}$, we defined $\mathcal{D}_x$ to be the set of observed inputs, and we also defined the auxiliary sets $M = \mathcal{D}_x \setminus R$, the set of all inputs in the observed dataset that are not a part of the reference set $R$, and $B = R \cup M$, the set of all points in the reference and observed dataset. 
	
	We will start with the permutation invariance. It will suffice to show that each of the individual probability densities described at Section 2.1 are permutation equivariant, as the products / sums will then make the overall probability permutation invariant. Without loss of generality we will assume that the elements in the set $B$ are arranged as $B = \{\mathcal{D}_x, R \setminus \mathcal{D}_x\}$. Consider applying a permutation $\sigma(\cdot)$ over $\mathcal{D}$, $\tilde{\mathcal{D}} = \sigma(\mathcal{D})$; this will also induce the same permutation over $\mathcal{D}_x$, hence we will have that $\sigma(B) = \{\sigma(\mathcal{D}_x), R \setminus \mathcal{D}_x\}$. 
	Now consider the fact that in the FNP each individual $\rvu_i$ is a function, let it be $f(\cdot)$, of the values of $\rvx_i$; as a result we will have that:
	\begin{align}
	    f(\sigma(B)) = \sigma(f(B)),
	\end{align}
	i.e. the latent variables $\rvu$ are permutation equivariant w.r.t. $B$. Continuing to the latent adjacency matrices $\rmG, \rmA$; in the FNP each particular element $i,j$ of these is a function of the values of the specific $\rvu_i, \rvu_j$. As a result, we will also have permutation equivariance for the rows / columns of $\rmG, \rmA$. Now since $\rmG, \rmA$ are essentially used as a way to factorize the joint distribution over the $\rvz_i$ in $B$ and given the fact that the distribution of each $\rvz_i$ is invariant to the permutation of its parents, we will have that the permutation of $B$ will result into the same re-ordering of the $\rvz_i$'s i.e.:
	\begin{align}
	    \sigma(\rmZ_B) = g(\sigma(B)),
	\end{align}
	where $g(\cdot)$ is the function that maps $B$ to $\rmZ_B$. Finally, as each $y_i$ is a function, let it be $h(\cdot),$ of the specific $\rvz_i$, we will similarly have that $\sigma(\rvy_B) = h(\sigma(B))$. We have thus described that all of the aforementioned random variables are permutation equivariant to $B$ and as a result, due to the permutation invariant product / integral / summation operators, we will have that the FNP model is permutation invariant.
	
	Continuing to the consistency under marginalization. Following~\cite{datta2016hierarchical} let us define $\tilde{\mathcal{D}} = \mathcal{D} \cup \{(\rvx_0, y_0)\}$ and consider two cases, one where the $\rvx_0$ belongs in $R$ and one where it doesn't. We will show that in both cases $\int p(\rvy_{\tilde{\mathcal{D}}}| \rmX_{\tilde{\mathcal{D}}})\mathrm{d}y_0 = p(\rvy_\mathcal{D}| \rmX_\mathcal{D})$. Lets consider the case when $\rvx_0 \in R$. In this case we have that the $M$ and $B$ sets will be the same across $\mathcal{D}$ and $\tilde{\mathcal{D}}$. As a result we can proceed as
	\begin{align}
		\int p(\rvy_{\tilde{\mathcal{D}}}| \rmX_{\tilde{\mathcal{D}}})\mathrm{d}y_0  & = \sum_{\rmG, \rmA}\int p_\theta(\rmU_B | \rmX_B)p(\rmG, \rmA| \rmU_B) p_\theta(\rmZ_B, \rvy_B| R, \rmG, \rmA)\mathrm{d}\rmU_B \mathrm{d}\rmZ_B \mathrm{d}y_{i \in R \setminus \tilde{\mathcal{D}}_x} \mathrm{d}y_0.
	\end{align}
	Now we can notice that $y_{i \in R \setminus \tilde{\mathcal{D}}_x} \cup y_0 = y_{i \in R \setminus \mathcal{D}_x}$, hence the measure that we are integrating over above can be rewritten as
	\begin{align}
		\int p(\rvy_{\tilde{\mathcal{D}}}| \rmX_{\tilde{\mathcal{D}}})\mathrm{d}y_0  & = \sum_{\rmG, \rmA}\int p_\theta(\rmU_B | \rmX_B)p(\rmG, \rmA| \rmU_B) p_\theta(\rvy_B, \rmZ_B|R, \rmG, \rmA) \mathrm{d}\rmU_B \mathrm{d}\rmZ_B \mathrm{d}y_{i \in R \setminus \mathcal{D}_x},
	\end{align}
	where it is easy to see that we arrived at the same expression as the one provided at Eq. 8. Now we will consider the case where $\rvx_0 \notin R$. In this case we have that $R \setminus \tilde{\mathcal{D}}_x = R \setminus \mathcal{D}_x$ and thus
	\begin{align}
		&\int p(\rvy_{\tilde{\mathcal{D}}}| \rmX_{\tilde{\mathcal{D}}})\mathrm{d}y_0   = \sum_{\rmG, \rmA, \rva_0}\int p_\theta(\rmU_B | \rmX_B)p(\rmG, \rmA| \rmU_B) p_\theta(\rvy_B, \rmZ_B| R, \rmG, \rmA) \nonumber\\&\qquad p_\theta(\rvu_0|\rvx_0) p(\rva_0|\rmU_R, \rvu_0) p_\theta(\rvz_0| \text{par}_{\rva_0}(R, \rvy_R)) p_\theta(y_0|\rvz_0)\mathrm{d}\rmU_B \mathrm{d}\rmZ_B \mathrm{d}\rvu_0 \mathrm{d}\rvz_0 \mathrm{d}y_{i \in R \setminus \mathcal{D}_x} \mathrm{d}y_0.
	\end{align}
	Notice that in this case the new point that is added is a leaf in the dependency graph, hence it doesn't affect any of the points in $\mathcal{D}$. As a result we can easily marginalize it out sequentially
	\begin{align}
		& \int p(\rvy_{\tilde{\mathcal{D}}}| \rmX_{\tilde{\mathcal{D}}})\mathrm{d}y_0   = \sum_{\rmG, \rmA, \rva_0}\int p_\theta(\rmU_B | \rmX_B)p(\rmG, \rmA| \rmU_B) p_\theta(\rvy_B, \rmZ_B| R, \rmG, \rmA)\nonumber\\& p_\theta(\rvu_0|\rvx_0) p(\rva_0|\rmU_R, \rvu_0) p_\theta(\rvz_0| \text{par}_{\rva_0}(R, \rvy_R)) \cancelto{1}{\left(\int p_\theta(y_0|\rvz_0)\mathrm{d}y_0\right)} \mathrm{d}\rmU_B \mathrm{d}\rmZ_B \mathrm{d}\rvu_0 \mathrm{d}\rvz_0 \mathrm{d}y_{i \in R \setminus \mathcal{D}_x} .\\		
		& = \sum_{\rmG, \rmA, \rva_0}\int p_\theta(\rmU_B | \rmX_B)p(\rmG, \rmA| \rmU_B) p_\theta(\rvy_B, \rmZ_B| R, \rmG, \rmA) \nonumber\\&p_\theta(\rvu_0|\rvx_0) p(\rva_0|\rmU_R, \rvu_0) \cancelto{1}{\left(\int p_\theta(\rvz_0| \text{par}_{\rva_0}(R, \rvy_R)) \mathrm{d}\rvz_0\right)}\mathrm{d}\rmU_B \mathrm{d}\rmZ_B \mathrm{d}\rvu_0 \mathrm{d}y_{i \in R \setminus \mathcal{D}_x} \\
		& = \sum_{\rmG, \rmA}\int p_\theta(\rmU_B | \rmX_B)p(\rmG, \rmA| \rmU_B) p_\theta(\rvy_B, \rmZ_B| R, \rmG, \rmA)\nonumber\\& p_\theta(\rvu_0|\rvx_0)  \cancelto{1}{\left(\sum_{\rva_0} p(\rva_0|\rmU_R, \rvu_0)\right)}\mathrm{d}\rmU_B \mathrm{d}\rmZ_B \mathrm{d}\rvu_0 \mathrm{d}y_{i \in R \setminus \mathcal{D}_x} \\
		& = \sum_{\rmG, \rmA}\int p_\theta(\rmU_B | \rmX_B)p(\rmG, \rmA| \rmU_B) p_\theta(\rvy_B, \rmZ_B| R, \rmG, \rmA) \cancelto{1}{\left(\int p(\rvu_0|\rvx_0)\mathrm{d}\rvu_0\right)}\mathrm{d}\rmU_B \mathrm{d}\rmZ_B \mathrm{d}y_{i \in R \setminus \mathcal{D}_x} \\
		& = \sum_{\rmG, \rmA}\int p_\theta(\rmU_B | \rmX_B)p(\rmG, \rmA| \rmU_B) p_\theta(\rvy_B, \rmZ_B| R, \rmG, \rmA) \mathrm{d}\rmU_B \mathrm{d}\rmZ_B \mathrm{d}y_{i \in R \setminus \mathcal{D}_x}
	\end{align}
	where it is similarly easy to see that we arrived at Eq. 8. So we just showed that in both cases we have that $\int p(\rvy_{\tilde{\mathcal{D}}}| \rmX_{\tilde{\mathcal{D}}})\mathrm{d}y_0 = p(\rvy_\mathcal{D}| \rmX_\mathcal{D})$, hence the model is consistent under marginalization.
\end{proof}

\section{Minibatch optimization of the bound of FNPs}
As we mentioned in the main text, the objective of FNPs is amenable to minibatching where the size of the batch scales according to the reference set $R$. We will only describe the procedure for the FNP as the extension for FNP$^+$ is straightforward. Lets remind ourselves that the bound of FNPs can be expressed into two terms:
\begin{align}
	\mathcal{L} & = \E_{q_\phi(\rmZ_R|\rmX_R)p_\theta(\rmU_R, \rmG|\rmX_R)}[\log p_\theta(\rvy_R, \rmZ_R| R, \rmG) - \log q_\phi(\rmZ_R|\rmX_R)] + \nonumber \\  & + \E_{p_\theta(\rmU_\mathcal{D}, \rmA|\rmX_\mathcal{D})q_\phi(\rmZ_M|\rmX_M)} [\log p_\theta(\rvy_M| \rmZ_M) + \log p_\theta\left(\rmZ_M|\text{par}_{\rmA}(R, \rvy_R)\right) - \log q_\phi(\rmZ_M|\rmX_M)] \nonumber \\
	& = \mathcal{L}_R + \mathcal{L}_{M|R},\label{eq:ngp_bound_app}
\end{align}
where we have a term that corresponds to the variational bound on the datapoints in $R$, $\mathcal{L}_R$, and a second term that corresponds to the bound on the points in $M$ when we condition on $R$, $\mathcal{L}_{M|R}$.  While the $\mathcal{L}_R$ term of Eq.~\ref{eq:ngp_bound_app} cannot, in general, be decomposed to independent sums due to the DAG structure in $R$, the $\mathcal{L}_{M|R}$ term can; from the conditional i.i.d. nature of $M$ and the structure of the variational posterior we can express it as $M$ independent sums:
\begin{align}
    \mathcal{L}_{M|R} & 
    =\E_{p_\theta(\rmU_R|\rmX_R)}\Bigg[\sum_{i=1}^{|M|}\E_{p_\theta(\rvu_i, \rmA_i|\rvx_i, \rmU_R)q_\phi(\rvz_i|\rvx_i)} \bigg[\log p_\theta\left(\rvy_i, \rvz_i| \text{par}_{\rmA_i}(R, \rvy_R)\right) - \nonumber \\& \qquad\qquad\qquad\qquad  - \log q_\phi(\rvz_i| \rvx_i)\bigg]\Bigg]. 
\end{align}
We can now easily use a minibatch $\hat{M}$ of points from $M$ in order to approximate the inner sum and thus obtain unbiased estimates of the overall bound that depend on a minibatch $\{R, \hat{M}\}$:
\begin{align}
    \tilde{\mathcal{L}}_{M|R} & = \E_{p_\theta(\rmU_R|\rmX_R)}\Bigg[\frac{|M|}{|\hat{M}|}\sum_{i=1}^{|\hat{M}|}\E_{p_\theta(\rvu_i, \rmA_i|\rvx_i, \rmU_R)q_\phi(\rvz_i|\rvx_i)} \bigg[\log p_\theta\left(\rvy_i, \rvz_i| \text{par}_{\rmA_i}(R, \rvy_R)\right) - \nonumber \\& \qquad\qquad\qquad\qquad  - \log q_\phi(\rvz_i| \rvx_i)\bigg]\Bigg], 
\end{align}
thus obtain the following unbiased estimate of the overall bound that depends on a minibatch $\{S, \hat{M}\}$
\begin{align}
    \mathcal{L} \approx \mathcal{L}_R + \tilde{\mathcal{L}}_{M|R}.
\end{align}
In practice, this might limit us to use relatively small reference sets as training can become relatively expensive; in this case an alternative would be to subsample also the reference set and just reweigh appropriately $\mathcal{L}_R$. This provides a biased gradient estimator but, after a limited set of experiments, it seems that it can work reasonably well.

\section{Predictive distribution of FNPs}
Given the fact that the parameters of the model has been optimized, we are now seeking a way to do predictions for new unseen points. As we assumed that all of the reference points are a part of the observed dataset $\mathcal{D}$, every new point $\rvx^*$ will be a part of $O$. Furthermore, we will have that $B = \mathcal{D}_x = \rmX_\mathcal{D}$. We will only provide the derivation for the FNP model, since the extension to FNP$^+$ is straightforward. To derive the predictive distribution for this point we will rely on Bayes theorem and thus have:
\begin{align}
	p_\theta(y^*|\rvx^*, \rmX_\mathcal{D}, \rvy_\mathcal{D}) = \frac{p_\theta(y^*, \rvy_\mathcal{D}| \rvx^*, \rmX_\mathcal{D})}{\int p_\theta(y^*, \rvy_\mathcal{D}| \rvx^*, \rmX_\mathcal{D}) \mathrm{d}y^*}.\label{eq:bayes}
\end{align}
As we have established the consistency of FNP, we know that the denominator is $p_\theta(\rvy_\mathcal{D}| \rmX_\mathcal{D})$. Therefore we can expand the enumerator and rewrite Eq.~\ref{eq:bayes} as

\begin{align}
		p_\theta(y^*|\rvx^*, \rmX_\mathcal{D}, \rvy_\mathcal{D})  & =
	\sum_{\rmG, \rmA, \rva^*}\int \frac{p_\theta(\rmU_\mathcal{D} | \rmX_\mathcal{D})p(\rmG, \rmA| \rmU_\mathcal{D}) p_\theta(\rmZ_\mathcal{D}, \rvy_\mathcal{D}| R, \rmG, \rmA)}{p_\theta(\rvy_\mathcal{D}| \rmX_\mathcal{D})}   \nonumber \\&  p_\theta(\rvu^*|\rvx^*)p(\rva^*| \rmU_R, \rvu^*)p_\theta(\rvz^*|\text{par}_{\rva^*}(R, \rvy_R))p_\theta(y^*|\rvz^*)\mathrm{d}\rmU_\mathcal{D}\mathrm{d}\rvu^*\mathrm{d}\rmZ_\mathcal{D}\mathrm{d}\rvz^*,
\end{align}
where $\rva^*$ is the binary vector that denotes which points from $R$ are the parents of the new point. We can now see that the top part is the posterior distribution of the latent variables of the model when we condition on $\mathcal{D}$. We can thus replace it with its variational approximation $p_\theta(\rmU_\mathcal{D}|\rmX_\mathcal{D})p(\rmG, \rmA|\rmU_\mathcal{D})q_\phi(\rmZ_\mathcal{D}|\rmX_\mathcal{D}) $ and obtain
\begin{align}
		p_\theta(y^*|\rvx^*, \rmX_\mathcal{D}, \rvy_\mathcal{D})
		& \approx \sum_{\rmG, \rmA, \rva^*}\int p_\theta(\rmU_\mathcal{D}|\rmX_\mathcal{D})p(\rmG, \rmA|\rmU_\mathcal{D})q_\phi(\rmZ_\mathcal{D}|\rmX_\mathcal{D}) \nonumber \\& \qquad p_\theta(\rvu^*|\rvx^*)p(\rva^*| \rmU_R, \rvu^*) p_\theta(\rvz^*|\text{par}_{\rva^*}(R, \rvy_R))p_\theta(y^*|\rvz^*)\mathrm{d}\rmU_\mathcal{D}\mathrm{d}\rvu^*\mathrm{d}\rmZ_\mathcal{D}\mathrm{d}\rvz^*\\
	& = \sum_{\rva^*}\int p_\theta(\rmU_R, \rvu^*|\rmX_R, \rvx^*) p(\rva^*| \rmU_R, \rvu^*) p_\theta(\rvz^*|\text{par}_{\rva^*}(R,\rvy_R))\nonumber \\& \qquad\qquad\qquad p_\theta(y^*|\rvz^*)\mathrm{d}\rmU_R\mathrm{d}\rvu^*\mathrm{d}\rvz^*
\end{align}
after integrating / summing over the latent variables that do not affect the distributions that are specific to the new point.

    \ifcontent
    \else
    \bibliographystyle{plain}
    \bibliography{bibliography}
    \fi
    \fi

\end{document}